\newtheorem{hypothesis}{Hypothesis}
\newtheorem{problem}{Problem}
\algnewcommand\algorithmicinput{\textbf{Input:}}
\algnewcommand\algorithmicoutput{\textbf{Output:}}
\algnewcommand\Input{\item[\algorithmicinput]}
\algnewcommand\Output{\item[\algorithmicoutput]}
\algnewcommand\AlgoComment[1]{\hfill\(\triangleright\) \textcolor{gray}{\textit{#1}}}
\definecolor{dkgreen}{rgb}{0,0.6,0}
\definecolor{gray}{rgb}{0.5,0.5,0.5}
\definecolor{mauve}{rgb}{0.58,0,0.82}
\newcommand{\add}[1]{\textcolor{blue}{#1}}
\newcommand{\marked}[1]{\textcolor{red}{#1}}
\newcommand{\yuyu}[1]{\textcolor{blue}{[\textbf{Yuyu: }#1]}}
\newcommand{\lxt}[1]{\textcolor{orange}{[\textbf{Xiaotian: }#1]}}
\newcommand{\stab}{\vspace{1.2ex}\noindent}
\newcommand{\stitle}[1]{\stab\noindent{\textbf{#1}}}
\newcommand{\etitle}[1]{\vspace{1mm}\noindent{\underline{\em #1}}}
\newcommand{\vs}{\textit{vs.}\xspace}
\newcommand{\ie}{\textit{i.e.,}\xspace}
\newcommand{\eg}{\textit{e.g.,}\xspace}
\newcommand{\model}{LEAD\xspace}
\newcommand{\lead}{LEAD\xspace}
\NewDocumentCommand{\nan}{ mO{} }{\textcolor{blue}{\textsuperscript{\textit{Nan}}\textsf{\textbf{\small[#1]}}}}
\newtheoremstyle{mystyle}       
  {6pt}                        
  {6pt}                        
  {\normalfont}                   
  {0pt}                         
  {\bfseries}                   
  {.}                           
  {5pt plus 1pt minus 1pt}      
  {\thmname{\textsc{#1}}\thmnumber{ #2}}  
\theoremstyle{mystyle}
\newcommand{\step}[1]{\tcbox[stepbox]{Step~#1}}
\setlist[itemize]{leftmargin=0.4cm, topsep=2pt, itemsep=1pt}  
\newcommand\vldbdoi{XX.XX/XXX.XX}
\newcommand\vldbvolume{14}
\newcommand\vldbissue{1}
\newcommand\vldbavailabilityurl{URL_TO_YOUR_ARTIFACTS}
\begin{document}

\title{\mbox{LEAD: Iterative Data Selection for Efficient LLM Instruction Tuning}}

\author{Xiaotian Lin}
\affiliation{%
	\institution{HKUST (GZ)}
	\country{Guangzhou, China}
}

\author{Yanlin Qi}
\affiliation{%
	\institution{Université Paris Cité}
	\country{Paris, France}
}

\author{Yizhang Zhu}
\affiliation{%
	\institution{HKUST (GZ)}
	\country{Guangzhou, China}
}

\author{Themis Palpanas}
\affiliation{%
	\institution{Université Paris Cité}
	\country{Paris, France}
}

\author{Chengliang Chai}
\affiliation{%
	\institution{BIT}
	\country{Beijing, China}
}

\author{Nan Tang}
\affiliation{%
	\institution{HKUST (GZ)}
	\country{Guangzhou, China}
}

\author{Yuyu Luo$^*$}
\affiliation{%
	\institution{HKUST (GZ)}
	\country{Guangzhou, China}
}

\begin{abstract}
Instruction tuning has emerged as a critical paradigm for improving the capabilities and alignment of large language models (LLMs). However, existing iterative model-aware data selection methods incur significant computational overhead, as they rely on repeatedly performing full-dataset model inference to estimate sample utility for subsequent training iterations, creating a fundamental efficiency bottleneck.
In this paper, we propose \model, an efficient iterative data selection framework that accurately estimates sample utility entirely within the standard training loop, eliminating the need for costly additional model inference. At its core, \model introduces Instance-Level Dynamic Uncertainty (IDU), a theoretically grounded utility function combining instantaneous training loss, gradient-based approximation of loss changes, and exponential smoothing of historical loss signals. To further scale efficiently to large datasets, \model employs a two-stage, coarse-to-fine selection strategy, adaptively prioritizing informative clusters through a multi-armed bandit mechanism, followed by precise fine-grained selection of high-utility samples using IDU.
Extensive experiments across four diverse benchmarks show that \model significantly outperforms state-of-the-art methods, improving average model performance by 6.1\%-10.8\% while using only 2.5\% of the training data and reducing overall training time by 5-10×.
\end{abstract}

\maketitle



\begingroup
\renewcommand\thefootnote{}\footnote{\noindent
$^*$Corresponding author: Yuyu Luo (E-mail: yuyuluo@hkust-gz.edu.cn)

\noindent This work is licensed under the Creative Commons BY-NC-ND 4.0 International License. Visit \url{https://creativecommons.org/licenses/by-nc-nd/4.0/} to view a copy of this license. For any use beyond those covered by this license, obtain permission by emailing \href{mailto:info@vldb.org}{info@vldb.org}. Copyright is held by the owner/author(s). Publication rights licensed to the VLDB Endowment. \\
\raggedright Proceedings of the VLDB Endowment, Vol. \vldbvolume, No. \vldbissue\ %
ISSN 2150-8097. \\
\href{https://doi.org/\vldbdoi}{doi:\vldbdoi} \\
}\addtocounter{footnote}{-1}\endgroup


\ifdefempty{\vldbavailabilityurl}{}{
\vspace{.3cm}
\begingroup\small\noindent\raggedright\textbf{PVLDB Artifact Availability:}\\
The source code, data, and/or other artifacts have been made available at \url{https://github.com/HKUSTDial/LEAD}.
\endgroup
}


\section{Introduction}
\label{sec:intro}

\begin{figure}[t!]
  \includegraphics[width=\columnwidth]{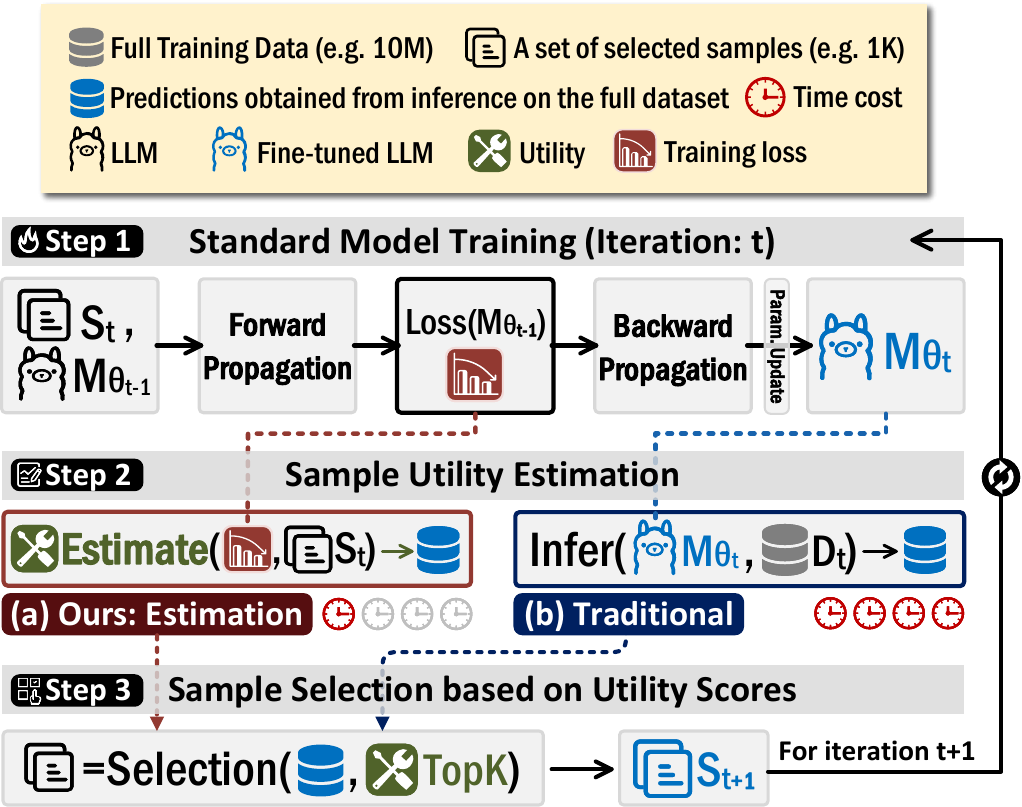} 
  \vspace{-2em}
  \caption{Comparison of Iterative Model-Aware Solutions.}
  \label{fig:intro}
  \vspace{-1em}
\end{figure}

Instruction tuning has emerged as a powerful paradigm to improve the performance and alignment of large language models (LLMs) by fine-tuning them on instruction-response pairs~\cite{sun2024itd, li2024quantity, chang2024survey, albalaksurvey,DBLP:conf/cidr/0001YF0LH24}. 
Recent studies indicate that data quality, rather than quantity alone, is crucial for substantial performance gains~\cite{zhou2023lima, albalaksurvey,DBLP:journals/pvldb/LiLCLT24,liu2025surveynl2sqllargelanguage}. Consequently, recent research has focused on automatically selecting informative subsets of training data, guided by selection metrics such as data diversity and data quality~\cite{wu2023self, yu2024diversify, bukharin2024data,DBLP:journals/vldb/QinLTL20,DBLP:journals/tkde/ChaiWLNL23}. However, since these methods do not directly leverage feedback from the model, they fail to dynamically adapt data selection to the model's evolving state and specific learning needs throughout training.

In response, recent efforts have shifted toward \textit{model-aware data selection}, which explicitly utilizes model-derived signals to dynamically identify informative training examples~\cite{song2024iterselecttune, wang2024greats}.
These model-aware methods broadly fall into two categories: \textit{non-iterative} and \textit{iterative}. Non-iterative methods select data once based on initial model predictions before iterative training~\cite{xia2024less, li2024quantity}. However, since they do not adapt to the model evolvement during training, their effectiveness is inherently limited~\cite{xia2024rethinking}. In contrast, iterative methods interleave model fine-tuning and data selection across multiple rounds, iteratively choosing new informative samples based on the model's latest feedback~\cite{xia2024less}.
As shown in Figure~\ref{fig:intro}-\step{2-(b)}, most existing iterative model-aware methods typically rely on explicit model inference to assess the utility of samples. Specifically, after each training iteration, these methods perform inference on \textit{every} sample in the training set to derive feedback signals (\eg model uncertainty scores) for utility estimation. Although effective at adapting data selection to the model's evolving state, repeatedly performing full-dataset inference significantly increases computational overhead.
For example, the recent IFD method~\cite{li2024quantity} spends approximately $98$ GPU-hours selecting data from a pool of only $600K$ samples in a single round.

This predicament leads to a natural \textbf{research question}: 
\textit{Can we retain the benefits of iterative model-aware data selection without repeatedly performing costly full-dataset inference?}
In other words, can we effectively determine ``select what to learn next'' by exclusively utilizing information already computed during standard training, without any additional model inference overhead?

In this work, we posit that the answer is yes. As shown in Figure~\ref{fig:intro}-\step{1}, our key insight is that during standard training, the model first conducts a forward propagation step using the current mini-batch of samples, computes the per-sample losses based on its predictions, and subsequently updates its parameters via backward propagation. Crucially, this training process naturally produces a per-sample loss for each training instance in the mini-batch. Intuitively, this loss indicates how challenging a sample is for the model—higher losses reflect greater difficulty and thus greater potential informativeness for future learning. Hence, these training-time losses inherently serve as valuable indicators of a sample's utility. Indeed, they provide an effective proxy for explicit utility metrics (\eg model uncertainty) typically obtained through costly, separate inference steps~\cite{han2025automatic}.

\textit{If we can cleverly harness these inherent training signals across the whole dataset}, we could \textbf{estimate} the utility of each sample \textbf{without additional inference (inference-free)} (see Figure~\ref{fig:intro}-\step{2-(a)}).
This idea – leveraging training-time loss signals to guide data selection – offers the potential to eliminate the full-dataset inference stage while still adapting to the model's training state. 


\stitle{Challenges.} 
Realizing this idea in practice is non-trivial.

First, although using training-time losses allows us to avoid explicit inference, a subtle yet fundamental issue arises due to a timing misalignment. Specifically, as shown in Figure~\ref{fig:intro}-\step{1}, the training loss observed at iteration $t$ reflects the model's performance \textit{before} updating parameters (model state $M_{\theta_{t-1}}$), whereas the utility of selecting samples ideally should consider their usefulness \textit{after} the parameter update (\ie $M_{\theta_{t}}$ at iteration $t+1$). This temporal mismatch means that naively reusing pre-update loss signals may not accurately reflect true sample utility after the next parameter update. We term this issue as the \textit{temporal mismatch} challenge (\textbf{C1}).

Second, raw loss signals can be noisy or unstable – they fluctuate from one update to the next due to randomness (\eg varying batch composition) and the non-stationary nature of training, thus naively trusting instantaneous loss values might lead to suboptimal choices.
This issue highlights the \textit{instability of loss signals} challenge (\textbf{C2}). 
Third, even if we successfully eliminate separate inference steps, individually estimating utility and selecting informative samples remains inefficient for large-scale datasets (\eg containing millions of samples). We refer to this as the  \textit{sample-level selection efficiency challenge} (\textbf{C3}). Thus, we need an effective mechanism that can rapidly narrow down candidate samples while prioritizing those most likely to substantially improve the model.

\stitle{Our Methodology: Iterative Data Selection with Inference-free Utility Estimation.}
To address the above challenges, we propose \model, a theoretically-grounded iterative data selection framework that integrates seamlessly into the model training loop, accurately estimating sample utility without incurring additional inference overhead.
The core theoretical insight behind inference-free yet accurate utility estimation lies in effectively addressing two critical challenges: (\textbf{C1}) the temporal mismatch between loss computation and parameter updates, and (\textbf{C2}) the inherent instability of instantaneous loss signals.

To achieve this, we propose a novel sample utility estimation function called {\em Instance-Level Dynamic Uncertainty} (\textbf{IDU}).
IDU explicitly implements the \texttt{Estimate} step depicted in Figure~\ref{fig:intro}-\step{2-(a)} by combining three naturally available training signals: (1) the current training loss for each sample, (2) gradient-based approximation, derived from gradient correlation approximations, to anticipate loss changes at the next parameter update (addressing \textbf{C1}), and (3) historical loss trends via exponential smoothing to reduce random noise and improve stability (addressing \textbf{C2}). Importantly, IDU is computed entirely using training-time signals naturally available during model updates (losses and logits), thus incurring no additional inference overhead. Finally, we conduct a Lagrangian function and utilize complementary slackness conditions to rigorously derive optimal parameters for IDU, ensuring both theoretical soundness and practical effectiveness.

Guided by this theoretical foundation, our LEAD framework employs a practical coarse-to-fine data selection strategy (Figure~\ref{fig:method}).

\begin{figure}[t!]
  \includegraphics[width=\columnwidth]{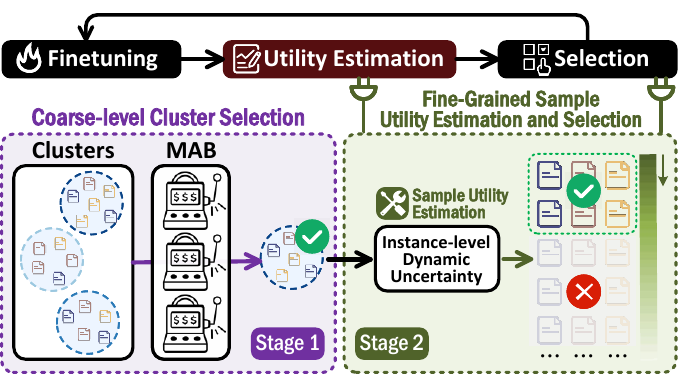}
  \vspace{-2em}
  \caption{A High-level Overview of \model.}
  \vspace{-2em}
  \label{fig:method}
\end{figure}

\paragraph{Stage 1: Coarse-level Cluster Selection.}
Recall our third challenge (\textbf{C3}) -- efficient candidate selection at scale. To address this, we first partition the dataset offline into clusters based on two widely-used metrics: (1) \textit{instruction-following difficulty}, measuring how challenging each instruction is for the model~\cite{li2024quantity}, and (2) \textit{task-level similarity}, grouping semantically related instructions~\cite{li2024instruction}.
This clustering step is performed only once per dataset.
During training, \model employs a multi-armed bandit (MAB) algorithm~\cite{vermorel2005multi} to dynamically identify and prioritize clusters likely to yield higher rewards -- clusters containing samples with greater potential to significantly enhance the model's performance (addressing \textbf{C3}).

\paragraph{Stage 2: Fine-Grained Sample Utility Estimation and Selection.} Within each selected cluster, \model utilizes the IDU function to estimate the utility of individual samples precisely. Specifically, given the IDU scores computed based on the previously discussed training signals (losses, historical trends, and gradient predictions), \model prioritizes and selects samples with the highest IDU values. Therefore, samples predicted to yield higher improvements for the model after subsequent parameter updates are selected preferentially.

\stitle{Contributions.} This paper makes the following contributions:

\stab \textit{(1) Problem Formulation.} We formally introduce the problem of Iterative Data Selection with Inference-Free Utility Estimation, defining a scenario where iterative model-aware selection is performed without incurring additional inference overhead (Section~\ref{sec:problem}).

\stab \textit{(2) Instance-Level Dynamic Uncertainty (IDU).}
We develop a new sample utility estimation function, IDU, which effectively addresses temporal mismatch and instability in loss signals by integrating current losses, gradient-based approximations of loss changes, and exponential smoothing of historical loss signals. All components are computed directly from naturally available training signals without requiring additional model inference (Section~\ref{sec:pre_idu}).

\stab \textit{(3) LEAD Framework.} 
We propose \model, a theoretically grounded and efficient iterative data selection framework seamlessly integrated into the standard model training process, eliminating repeated costly inference steps (Section~\ref{sec:overview} and Section~\ref{sec:details}).

\stab \textit{(4) Theoretical Analysis.} 
We rigorously ground our framework in a Lagrangian optimization formulation, employing complementary slackness conditions and gradient correlation approximations to derive theoretically optimal parameters for the IDU function, ensuring both soundness and practical effectiveness (Section~\ref{sec:proof}).

\stab \textit{(5) Extensive Experiments.}
Extensive experiments across four diverse benchmarks show that \model significantly outperforms state-of-the-art methods, improving average model performance by 6.1\%-10.8\% while using only 2.5\% of the training data and reducing overall training time by 5-10× (Section~\ref{sec:expr}).

\section{Preliminary and Problem Formulation}
\label{sec:problem}

\subsection{Instruction Tuning for LLMs}

%
Instruction tuning fine-tunes pretrained large language models using instruction-response pairs, enabling them to generalize to new tasks by interpreting diverse instructions~\cite{wang2023far}.
Formally, given instruction-response pairs $(x, y)$ from dataset $\mathcal{D}$, instruction tuning optimizes model $\theta$ by minimizing the expected loss:

\begin{equation}\small
\min_{\theta} \mathbb{E}_{(x, y) \sim \mathcal{D}} \left[{L}(\mathcal{M}_\theta(x), y)\right]
\end{equation}

\noindent
where ${L}$ is a task-specific loss function such as cross-entropy.

\subsection{Data Selection for Instruction Tuning}


In practice, datasets often originate from vast and noisy sources. Given limited computational budgets and data quality concerns, selecting the most informative samples for instruction tuning becomes crucial. We formalize this as the {data selection problem}, categorized into two groups: static and iterative data selection.

\stitle{Static Data Selection for Instruction Tuning.} 
Given a dataset $\mathcal{D}$, it selects a fixed subset $\mathcal{D}^* \subseteq \mathcal{D}$ under budget constraint $B$: 

\begin{equation}\small
\label{eq:static_selection}
\min_{\mathcal{D}^*\subseteq \mathcal{D},\ |\mathcal{D}^*|\leq B} \mathbb{E}_{(x,y)\sim \mathcal{D}_{\text{target}}}\left[{L}(\mathcal{M}_{\theta}(x),y)\right],
\end{equation}

\noindent
where $\mathcal{D}_{\text{target}}$ denotes the target distribution. However, static methods cannot adaptively select samples based on the model's evolving capabilities to maximize learning effectiveness during training~\cite{albalaksurvey}.

\stitle{Iterative Data Selection for Instruction Tuning.}
Iterative data selection interleaves model fine-tuning and data selection across multiple iterations. Formally, given the model parameters $\theta_t$ at iteration $t$, we adaptively select a subset $S_t \subseteq \mathcal{D}$ based on a utility function $f(\theta_t, x)$, which estimates the expected contribution of each sample $x$ to future model improvement (\eg loss reduction). The iterative selection problem can thus be formulated as:
\begin{equation}\small
\label{eq:iterative_selection_traditional}
\begin{aligned}
&\max_{\{S_1,\dots,S_T\}}\sum_{t=1}^{T}\sum_{x\in S_t} f_t(\theta_t, x),
&\text{s.t.}\quad \sum_{t=1}^{T}|S_t|\leq B,
\end{aligned}
\end{equation}
where $B$ is the total sample selection budget allowed during training.

Existing methods typically estimate the utility $f_t(\theta_t,x)$ by performing full-dataset inference at each iteration. Specifically, after fine-tuning the model on selected samples $S_t$, traditional methods explicitly run inference on the entire dataset $\mathcal{D}$ using the updated model parameters $\theta_t$ to compute utility scores:
\begin{equation}
f_t(\theta_t, x) = g(\text{Infer}(\theta_t, x)), \quad\forall x\in \mathcal{D},
\end{equation}
where $\text{Infer}(\theta_t, x)$ denotes inference (\eg loss or uncertainty computation) and $g(\cdot)$ maps inference results to utility values. 

Consequently, the next subset $S_{t+1}$ is selected as:
\begin{equation}\small
S_{t+1} = \underset{S_t\subseteq \mathcal{D},\;|S_t|\le k}{\arg\max}\sum_{x\in \mathcal{D}}f_t(\theta_t,x),\quad\text{s.t.} \quad |S_t|\leq k, \quad T\cdot k\leq B.
\end{equation}

Note that in iterative data selection, we typically assume a fixed selection size $k$ per iteration, constrained by the total selection budget $B$. Thus, the number of iterations $T$ and the selection size per iteration $k$ satisfy the relation $T\cdot k\leq B$.


\subsection{Problem Formulation}
\label{sub:problem}

Existing iterative model-aware methods rely heavily on repeated full-dataset inference for sample utility estimation, leading to significant computational costs.
To eliminate this, we define the problem of \textit{Iterative Data Selection with Inference-Free Utility Estimation}.

\begin{definition}[\textbf{Iterative Data Selection with Inference-Free Utility Estimation}]\textit{
Given a total sample selection budget $B$, our objective is to identify subsets $\{S_t\}^T_{t=1}$ that maximize the cumulative estimated utility, where the utility function $f_t(\theta_{t-1}, x)$ is computed exclusively from training-time signals (\eg training losses or logits) without incurring additional inference overhead:}

\begin{equation}\small
\label{eq:zero_cost_utility_definition}
\begin{aligned}
&\max_{\{S_1,\dots,S_T\}}\sum_{t=1}^{T}\sum_{x\in S_t}f_t(\theta_{t-1}, x), 
&\text{s.t.}\quad \sum_{t=1}^{T}|S_t|\leq B,
\end{aligned}
\end{equation}

\textit{Specifically, at each iteration $t$,  the utility estimation $f_t(\theta_{t-1},x)$ utilizes the loss signal computed using model parameters $\theta_{t-1}$ immediately after the forward propagation step, but before the backward propagation (parameter update). Thus, no additional inference is required to estimate utilities for data selection at iteration $t$.}
\end{definition}

\begin{figure*}[t!]
  \centering
  \includegraphics[width=\linewidth]{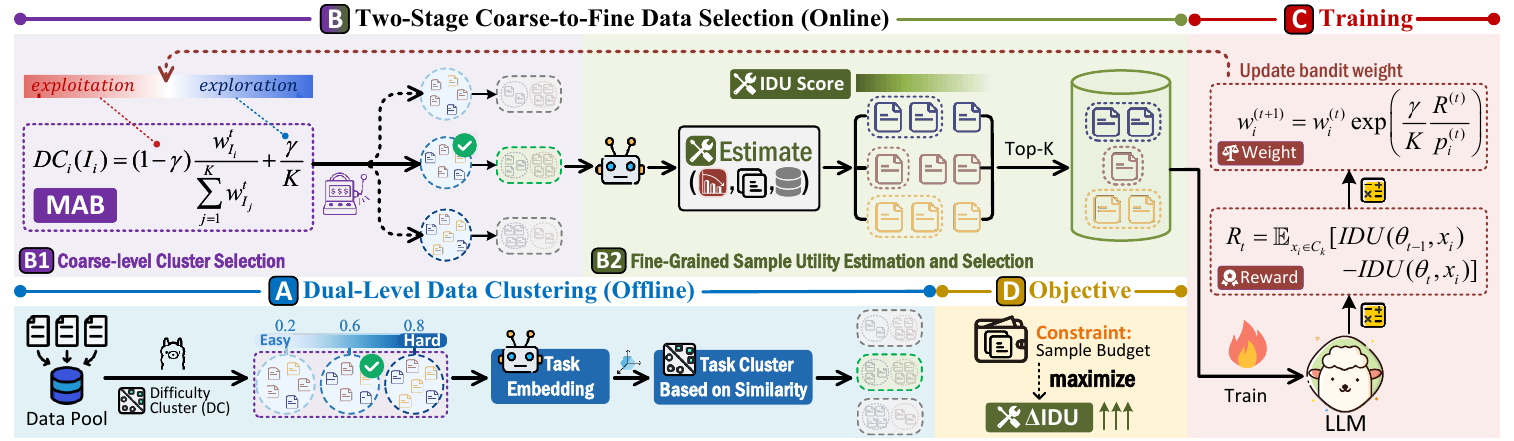}
  \vspace{-2em}
  \caption{An Overview of the \model Framework.}
  \label{fig:framework}
\end{figure*}

Our goal, therefore, is to design accurate and stable inference-free utility estimation methods. 
For simplicity, we use $f_t(\theta_{t-1},x)$ and $f(\theta_{t-1},x)$ interchangeably when the context clearly refers to data selection at iteration $t$.

\section{Instance-Level Dynamic Uncertainty Utility}
\label{sec:pre_idu}

Designing an effective inference-free utility function $f(\theta_{t-1}, x)$ requires addressing two fundamental challenges as discussed in Section~\ref{sec:intro}: 
\textbf{(C1)} the temporal mismatch between pre-update loss signals and their actual post-update utility, and (\textbf{C2}) the instability of instantaneous loss signals due to random fluctuations and noise.

To tackle these challenges, we first define a baseline utility function based on a \textit{loss-based uncertainty metric}, and then introduce an improved formulation, termed \textit{Instance-Level Dynamic Uncertainty (IDU)} utility function, which explicitly addresses these limitations.

\stitle{Loss-based Uncertainty Estimation.} Specifically, our approach begins by formalizing Instance-level uncertainty through a loss-based formulation. Formally, given an instruction-response pair $(x,y)$, we define the Instance-level Uncertainty (IU)~\cite{gardner1985exponential} at training iteration $t$ as the empirical cross-entropy between the model's current predictive distribution and the ground-truth response:

\vspace{-1em}
\begin{equation}
\small
IU(\theta_t, y|x) = L(\theta_t, x) = -\frac{1}{T}\sum_{j=1}^{T}\log p_{\theta_t}(t_j^{y}|x,t_1^{y},\dots,t_{j-1}^{y}),
\end{equation}

\noindent
where $T$ is response length, $t_{j}^{y}$ refers to the $j$-th response token, and $p_{\theta_t}$ the model's token-level predictive probability distribution.

IU naturally corresponds to the training-time negative log-likelihood loss, providing a direct and computationally free baseline. However, IU alone cannot effectively handle challenges \textbf{(C1)} and \textbf{(C2)}.

\stitle{Instance-Level Dynamic Uncertainty (IDU).} 
To explicitly mitigate both temporal mismatch (\textbf{C1}) and instability (\textbf{C2}) of loss signals, we introduce the Instance-Level Dynamic Uncertainty (IDU), which incorporates exponential smoothing of historical losses and gradient-based approximation of loss changes. 
Formally, given subset $S_t$ at iteration $t$, IDU for sample $x$ is recursively defined as:

\begin{equation} 
\label{def:utility function}
\small
\begin{split}
f(\theta_{t-1}, x) &= IDU(\theta_{t-1}, x) \\&=
(1 - b) \cdot \underbrace{[\underbrace{L(\theta_{t-1}, x)}_{\text{\textbf{IU} at $\theta_{t-1}$}} + \underbrace{\Delta L'(\theta_t, x)}_{\text{Utility Change}}]}_{\text{\textbf{Estimated} Utility at $\theta_t$}} + b \cdot \underbrace{IDU(\theta_{t-2}, x)}_{\text{Historical Utility}}
\end{split} \quad,
\end{equation}

\noindent where $b \in [0, 1)$ controls the balance between current and historical signals, $L(\theta_{t-1}, x)$ is the IU computed using model parameters $\theta_{t-1}$, and $\Delta L'(\theta_t, x)$ is an approximation of the expected utility change, defined as:
$\label{def:utility_estimation} 
\Delta L'(\theta_t, x) = L(\theta_t, x) - L(\theta_{t-1}, x)$.

We have the following key clarifications regarding Eq.~\eqref{def:utility function}:

\begin{itemize} 
    \item The instantaneous loss $L(\theta_{t-1}, x)$ is computed naturally during forward propagation at iteration $t$, requiring no extra inference.
    \item The $\Delta L'(\theta_t, x)$ denotes the anticipated loss change from $\theta_{t-1}$ to $\theta_t$. Importantly, this estimation leverages only readily available gradient and historical loss information collected at iteration $t-1$, ensuring no extra inference is performed at iteration $t$.
\end{itemize}

IDU effectively resolves both fundamental challenges through two carefully designed components:

\begin{itemize}
\item \textbf{Utility Change Estimation (Gradient-Based approximation).}
To address temporal mismatch (\textbf{C1}), IDU explicitly estimates the expected utility change ($\Delta L'(\theta_t, x)$) between consecutive iterations. Instead of performing additional inference passes with updated parameters ($\theta_t$), we leverage gradient-based approximations derived from backward propagation at iteration $t-1$ to estimate the loss at iteration $t$. 

\item  \textbf{Historical Utility (Exponential Smoothing).}
To tackle instability (\textbf{C2}), IDU incorporates historical uncertainty signals using an exponential smoothing mechanism. Rather than depending solely on instantaneous IU values, IDU maintains an exponential moving average of previous utility estimates ($IDU(\theta_{t-2}, x)$). This significantly reduces fluctuations caused by random noise and local minima encountered during training.

\end{itemize}

We will elaborate on the details of computing IDU and optimizing the coefficient $b$ of the IDU utility function in Section~\ref{sec:Instance-Level Dynamic Uncertainty}.

\section{\lead: \underline{LEA}rning to Iteratively Select \underline{D}ata}
\label{sec:overview}

We first present an overview of \model (Section~\ref{sub:overview}), followed by the three key components enabling inference-free iterative data selection (Section~\ref{sub:componets}). Finally, we describe how these components systematically interact during iterative training (Section~\ref{sub:workflow}).

\subsection{\model Framework: An Overview}
\label{sub:overview}

Figure~\ref{fig:framework} provides a high-level overview of \model, illustrating its coarse-to-fine approach guided by a theoretically grounded IDU utility function. The framework comprises two key phases: offline dual-level clustering and online adaptive selection.

\stitle{Dual-Level Data Clustering (Offline).}  
As shown in Figure~\ref{fig:framework}-(A), we first perform an offline preprocessing step to systematically partition the dataset into clusters based on two complementary dimensions: instruction-following difficulty~\cite{li2024quantity} and task similarity~\cite{li2024instruction}. This dual-level clustering is conducted offline, incurring no additional computational overhead during online training.

\etitle{(1) Difficulty-aware Instance-level Clustering.}  
We use the Instruction-Following Difficulty (IFD) metric~\cite{li2024quantity} to evaluate instance-level difficulty. Given an instruction-response pair $(x,y)$, the IFD is computed as:
$IFD(y \mid x) = \frac{PPL(y \mid x)}{PPL(y)}$, where $PPL(y \mid x)$ and $PPL(y)$ denote the perplexities of generating the $y$ with and without the $x$, respectively. Using these IFD scores, we group training samples into clusters through sliding intervals (\eg intervals of 0.1).

\etitle{(2) Similarity-based Task-level Clustering.}  
Within each difficulty cluster, we further conduct finer-grained clustering based on task similarity. Specifically, we extract task-specific embeddings from instructions by emphasizing task-defining terms (\eg key verbs and nouns), following the approach in~\cite{li2024instruction}. We then apply the $K$-means algorithm~\cite{DBLP:journals/pacmmod/LuoZ00CS23} to group instructions by task similarity.

\stitle{Coarse-to-Fine Data Selection (Online).}
During the training, as shown in Figure~\ref{fig:framework}-(B), \lead~implements a coarse-to-fine selection process designed to maximize utility and training effectiveness under a given total sample budget.

\etitle{(1) Coarse-Level Cluster Selection (via MAB).} 
    At each training iteration \( t \), we first employ a Multi-Armed Bandit (MAB) algorithm (specifically EXP3, detailed in Section~\ref{sub:mab}) to dynamically select one difficulty-level cluster that is most beneficial to the current model state. The MAB algorithm leverages a self-guided IDU-based reward signal, directly measuring the reduction in IDU scores derived from training on previously selected clusters.

\etitle{(2) Fine-Grained Sample Selection (via IDU).}
    After identifying the optimal difficulty-level cluster, we distribute the selection budget across its finer-grained task clusters. Specifically, we select the most informative samples from each task cluster based on their current IDU values (see Section~\ref{sec:Instance-Level Dynamic Uncertainty}), thus ensuring efficient fine-grained selection of training data at iteration \( t \).

These selected samples form the subset \( S_t \) used to fine-tune the model at iteration \( t \). After training, the model parameters are updated from \(\theta_{t-1}\) to \(\theta_t\), and the MAB rewards are updated accordingly, ensuring the \model framework continuously improves its data selection strategy.

\subsection{\model Framework: Core Components}
\label{sub:componets}

\model has three carefully designed core components.

\stitle{(1) Instance-Level Dynamic Uncertainty (IDU) Utility.}  
To estimate sample utility efficiently without additional inference, we introduce the Instance-Level Dynamic Uncertainty (IDU) metric. IDU combines exponential smoothing of historical losses and a gradient-based approximation of loss change, effectively addressing the temporal instability and inference overhead challenges inherent in traditional iterative selection methods (see  Section~\ref{sec:Instance-Level Dynamic Uncertainty}).

\stitle{(2) Adaptive Data Selection via MAB-Integrated Training Scheduler.}  
To integrate coarse and fine-grained selections seamlessly, we employ the MAB-EXP3 algorithm to dynamically balance exploration and exploitation among clusters. The MAB scheduler dynamically prioritizes clusters demonstrating higher historical utility gains, thus efficiently adapting to the model's evolving learning capabilities (further described in Section~\ref{sub:mab}).

\stitle{(3) Self-Guided IDU-Based Reward.}  
To guide the coarse-level cluster selection via MAB, we propose a novel reward function based on the reduction of IDU achieved by training on a given cluster without the need for external validation steps and additional inference (Please refer to Section~\ref{sec:loss-driven reward} for details).

Next, we illustrate how these components interact seamlessly in the iterative training workflow.

\begin{figure}[t!]
\includegraphics[width=0.95\columnwidth]{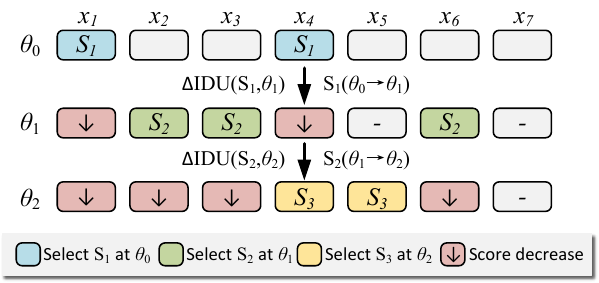}
  \caption{Iterative Sample Selection Guided by IDU Scores.}
  \label{fig:idu}
  \vspace{-1em}
\end{figure}

\subsection{Training Iteration Workflow of \model}
\label{sub:workflow}

The \model integrates iterative data selection with LLM instruction tuning. Each training iteration $t$ within \model comprises four steps.

\stitle{Step 1: Difficulty-Aware Cluster Selection.}  
    Select the optimal coarse-level difficulty cluster \( C_{i^*} \) via the MAB-EXP3 algorithm, guided by the reward derived from previous training iterations, reflecting the cluster's historical effectiveness.

\stitle{Step 2: Fine-Grained Sample Selection.}  
    Within the cluster \( C_{i^*} \), utilize the IDU function to select the top \( n_{i^*} \) most informative samples. These samples form the training subset \( S_t \). For example, in Figure~\ref{fig:idu}, at iteration \(\theta_0\), samples with the highest initial IDU scores (labeled as \(S_1\)) are chosen for training.

\stitle{Step 3: LLM Instruction Tuning.}  
    The selected samples (\(S_t\)) are used to fine-tune the model parameters, transitioning from the current parameters \(\theta_{t-1}\) to the updated parameters \(\theta_t\).

\stitle{Step 4: Reward and Utility Updates.}  
    After fine-tuning, trained samples typically show decreased IDU scores, reflecting reduced informativeness. This reduction serves as the training reward. As shown in Figure~\ref{fig:idu}, lowered IDU scores of previously selected samples (\eg \(S_1\) at \(\theta_0\) and \(S_2\) at \(\theta_1\)) prompt dynamic selection of new, more informative samples for subsequent iterations (\eg \(S_2\) to \(S_3\)).  
    Finally, both IDU scores and the MAB weights are updated accordingly, guiding the sample selection process in future iterations

Through this structured workflow, \lead~continuously and adaptively selects the most beneficial samples at each training step.
\section{The Design Details of \lead}
\label{sec:details}

We first show how to optimize our IDU utility under a budget constraint (Section~\ref{sec:Instance-Level Dynamic Uncertainty}), followed by an adaptive data selection scheduler via MAB algorithms (Section~\ref{sub:mab}), and finally, a self-guided IDU-based reward for cluster evaluation (Section~\ref{sec:loss-driven reward}).

\subsection{Instance-Level Dynamic Uncertainty Optimization under the Budget Constraint} 
\label{sec:Instance-Level Dynamic Uncertainty}

In Section~\ref{sec:pre_idu}, we introduced the $IDU$ utility (Eq.~\eqref{def:utility function}) for estimating sample utilities in iterative data selection. 
Note that our LEAD aims to iteratively select subsets of samples with the highest cumulative utility gain, defined as the expected reduction in average $IDU$ at each iteration ($\Delta IDU_t$) under a total budget constraint $B$. Formally, our optimization problem can be defined as follows.

\begin{problem}[Budget-Constrained IDU Utility Optimization] \label{def:idu} 
Given a total selection budget $B$, our goal is to maximize the cumulative expected utility over $T$ training iterations: 
\begin{align} \small
& \max_{b,T} \sum_{t=1}^{T} \mathbb{E}[\Delta IDU_t], \quad   \text{s.t.} \sum_{t=1}^{T} \mathbb{E}[n_t] \leq B \\
&\textit{where} \quad \mathbb{E}[n_t] = \alpha \cdot (1-b) \cdot |\overline{C}| \cdot (1+\text{CV}^2) \cdot (1+\mathcal{O}(\gamma))
\end{align} 

Here, \( n_t \) denotes the number of samples selected at iteration \( t \), \(\alpha\) is the sampling ratio, \(b\in[0,1)\) is the smoothing parameter controlling the influence of historical utility, \(|\overline{C}|\) is the average cluster size, and \(\text{CV}^2 = \frac{1}{K}\sum_{i=1}^{K}\frac{(|C_i|-|\overline{C}|)^2}{|\overline{C}|^2}\) quantifies variability among cluster sizes.
\end{problem}

To solve this problem, we construct a Lagrangian function incorporating the budget constraint and apply the complementary slackness condition to derive the optimal smoothing parameter \( b^* \). 
Specifically, the optimal smoothing coefficient \( b^* \) that maximizes cumulative utility gain under the budget constraint is given by:
$b^* = 1 - \frac{B}{\alpha\cdot|\overline{C}|\cdot T\cdot(1+\text{CV}^2)}$.
The detailed derivation and theoretical justification of \( b^* \) are provided in Theorem~\ref{proof:optimal smoothing coefficient} (Section~\ref{sec:proof}).


In practice, to effectively implement the optimal solution to our budget-constrained utility maximization problem, we first derive the optimal smoothing coefficient \( b^* \) from the theoretical analysis above. 
However, to fully instantiate our IDU utility function, we must also efficiently estimate the utility changes ($\Delta L'(\theta_t, S_t)$) between consecutive training iterations, as this term directly contributes to computing the cumulative utility gain $\Delta IDU_t$.
Directly calculating these utility changes would typically require additional inference steps, violating our zero-cost constraint.

To address this, we introduce the 
gradient-based approximation of utility change, as discussed below.

\stitle{Gradient-Based Approximation of Utility Change.}
Our approach efficiently utilizes gradient information computed during standard model training, thus requiring no extra computational resources beyond regular forward-backward propagation.

Formally, consider a subset of samples \( S_i \). When model parameters are updated from \(\theta_{t-1}\) to \(\theta_t\), the average uncertainty change (utility change) \(\Delta L(\theta_t, S_i)\) can be approximated as follows:

\begin{theorem}[Utility Change Approximation]
\label{the:iuchange}
For a given sample subset \( S_i \), the utility change from parameter update \(\theta_{t-1}\) to \(\theta_t\) can be approximated as:
\begin{align}\small
\Delta L'(\theta_t, S_i) &\equiv \frac{1}{|S_i|}\sum_{x\in S_i}(L(\theta_t, x)-L(\theta_{t-1}, x)) \nonumber\\
&\approx -\eta\left[\beta^2\delta_{t_k}+(1-\beta)^2\delta_{t-1}+2\beta(1-\beta)\sqrt{\delta_{t_k}\delta_{t-1}}\cos\phi\right],
\end{align}
where \(\eta\) is the learning rate, \(\delta_{t_k}\) and \(\delta_{t-1}\) denote historical gradient norms, and \(\phi\) is the angle between consecutive gradient directions, given by:
$
\cos\phi=\frac{\Delta\theta_{t_k}^{\top}\Delta\theta_{t-1}}{\|\Delta\theta_{t_k}\|\cdot\|\Delta\theta_{t-1}\|}.
$
\end{theorem}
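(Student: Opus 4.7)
The plan is to derive the approximation via a first-order Taylor expansion of the per-sample loss followed by a momentum-style decomposition of the parameter update. Concretely, I would first expand each $L(\theta_t,x)$ around $\theta_{t-1}$ so that $L(\theta_t,x)-L(\theta_{t-1},x)\approx \nabla_\theta L(\theta_{t-1},x)^{\top}\Delta\theta_t$, where $\Delta\theta_t=\theta_t-\theta_{t-1}$. Averaging over $x\in S_i$ and letting $g_t=\tfrac{1}{|S_i|}\sum_{x\in S_i}\nabla_\theta L(\theta_{t-1},x)$ denote the batch gradient, this yields $\Delta L'(\theta_t,S_i)\approx g_t^{\top}\Delta\theta_t$, reducing the problem to characterizing the inner product $g_t^{\top}\Delta\theta_t$ in terms of historical gradient information.

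Next, I would invoke the gradient correlation approximation described in the paper's framework: the effective update direction at iteration $t$ is expressed as a convex combination of two reference gradient directions, namely the stored historical update $\Delta\theta_{t_k}$ and the most recent update $\Delta\theta_{t-1}$. Writing $\Delta\theta_t \approx -\eta\bigl[\beta\,u_{t_k} + (1-\beta)\,u_{t-1}\bigr]$, where $u_{t_k}$ and $u_{t-1}$ are the corresponding gradient vectors with squared norms $\delta_{t_k}$ and $\delta_{t-1}$, and further approximating $g_t$ by the same combination (a standard simplification when consecutive mini-batch gradients are correlated on the training trajectory), the inner product collapses into a squared-norm computation $-\eta\,\|\beta u_{t_k}+(1-\beta)u_{t-1}\|^{2}$.

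I would then expand this squared norm using the identity $\|au+bv\|^{2}=a^{2}\|u\|^{2}+b^{2}\|v\|^{2}+2ab\,u^{\top}v$, recognize $u_{t_k}^{\top}u_{t-1}=\|u_{t_k}\|\,\|u_{t-1}\|\cos\phi=\sqrt{\delta_{t_k}\delta_{t-1}}\cos\phi$ with $\phi$ the angle between the two gradient directions, and verify that the angle admits the stated representation $\cos\phi=\Delta\theta_{t_k}^{\top}\Delta\theta_{t-1}/(\|\Delta\theta_{t_k}\|\,\|\Delta\theta_{t-1}\|)$ since updates and gradients are colinear under the $-\eta$ scaling. Collecting terms yields exactly the bound
\begin{equation*}
\Delta L'(\theta_t,S_i)\approx -\eta\bigl[\beta^{2}\delta_{t_k}+(1-\beta)^{2}\delta_{t-1}+2\beta(1-\beta)\sqrt{\delta_{t_k}\delta_{t-1}}\cos\phi\bigr].
\end{equation*}

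The main obstacle I expect is justifying the identification of the current batch gradient $g_t$ with the weighted historical mixture, since this is precisely the step that eliminates any need for fresh inference at iteration $t$. I would address it by appealing to local smoothness of the loss along the optimization trajectory and to empirical gradient correlation between nearby iterates, treating the residual as a higher-order term absorbed by the $\approx$ in the statement; the Taylor truncation and the neglect of curvature contributions should be controlled similarly, but making these assumptions explicit (e.g.\ Lipschitz gradient with constant governing the error) is the delicate part of the argument.
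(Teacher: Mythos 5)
Your proposal is correct and follows essentially the same route as the paper's own derivation: a first-order Taylor expansion of the per-sample loss to reduce the utility change to $-\eta\|\nabla L(\theta_{t-1},S_t)\|^2$, followed by the gradient correlation approximation that replaces the current batch gradient with a convex combination of $\nabla L(S_{t_k},\theta_{t_k-1})$ and $\nabla L(S_{t-1},\theta_{t-2})$, and finally an expansion of the resulting squared norm. You also correctly identify that the substantive leap — and the one the paper treats as a modeling hypothesis rather than a derived fact — is the identification of the current gradient with the historical mixture.
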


This approach ensures that our utility estimation remains efficient, accurate, and fully integrated into standard model training workflows.
The complete derivation of this gradient-based approximation method is presented in Theorem~\ref{proof:iu_change_approximation} (Section~\ref{sec:proof}).


While the above approximation method significantly enhances efficiency, its accuracy critically depends on selecting an appropriate approximation coefficient $\beta$.
To further refine our method, we analytically derive the optimal approximation weight \(\beta^*\) that minimizes approximation error.

\stitle{Optimal Approximation Coefficient \(\beta^*\).}
Formally, we define the approximation error function as:
$J(\beta)=\|\Delta L(\theta_t, S_i)-\Delta L'(\theta_t, S_i)\|^2$.
Minimizing this error function leads us to the theoretical  
\(\beta^*\):
\begin{theorem}[Optimal Weight 
\(\beta^*\)] The optimal approximation weight \(\beta^*\) minimizing the error function 
\(J(\beta)\) is given by: \begin{equation}\small \label{eq:optimal_beta} \beta^*=\frac{\delta_{t-1}-\sqrt{\delta_{t_k}\delta_{t-1}}\cos\phi}{\delta_{t_k}+\delta_{t-1}-2\sqrt{\delta_{t_k}\delta_{t-1}}\cos\phi}. \end{equation} \end{theorem}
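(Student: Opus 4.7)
The plan is a direct application of calculus: expand $J(\beta)$ as a smooth scalar function of $\beta$, impose the first-order optimality condition, and solve the resulting linear equation. The first step is to recognize that the bracketed quantity
\[
\beta^{2}\delta_{t_k} + (1-\beta)^{2}\delta_{t-1} + 2\beta(1-\beta)\sqrt{\delta_{t_k}\delta_{t-1}}\cos\phi
\]
has precisely the form $\lVert \beta u + (1-\beta) v\rVert^{2}$, where $u,v$ are vectors with $\lVert u\rVert^{2}=\delta_{t_k}$, $\lVert v\rVert^{2}=\delta_{t-1}$, and $u^{\top}v=\sqrt{\delta_{t_k}\delta_{t-1}}\cos\phi$, consistent with the gradient-correlation interpretation established in Theorem~\ref{the:iuchange}. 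This identification streamlines both the bookkeeping and the second-order analysis, since it reduces the whole question to finding the convex combination of two fixed vectors whose resulting norm best fits the target.

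Next, I would substitute $\Delta L'(\theta_t,S_i)=-\eta\lVert \beta u+(1-\beta)v\rVert^{2}$ into $J(\beta)$, differentiate with respect to $\beta$, and cancel the common nonzero chain-rule factor. The stationarity condition $\partial J/\partial\beta=0$ then reduces to $\tfrac{d}{d\beta}\lVert \beta u+(1-\beta)v\rVert^{2}=0$, which expands to the linear equation
\[
\bigl(\delta_{t_k}+\delta_{t-1}-2\sqrt{\delta_{t_k}\delta_{t-1}}\cos\phi\bigr)\,\beta \;=\; \delta_{t-1}-\sqrt{\delta_{t_k}\delta_{t-1}}\cos\phi,
\]
and solving directly yields the expression for $\beta^{\ast}$ in Eq.~\eqref{eq:optimal_beta}. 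Convexity is immediate: the coefficient of $\beta$ on the left equals $\lVert u-v\rVert^{2}\ge 0$ by the Cauchy--Schwarz inequality, with strict positivity whenever $u$ and $v$ are not collinear with equal magnitude, so the critical point is a genuine minimizer rather than a saddle or maximizer.

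The main obstacle I anticipate is justifying the cancellation step in the previous paragraph. A naive differentiation of $J(\beta)=(\Delta L(\theta_t,S_i)+\eta\lVert\beta u+(1-\beta)v\rVert^{2})^{2}$ produces the outer factor $\Delta L(\theta_t,S_i)+\eta\lVert\beta u+(1-\beta)v\rVert^{2}$, which admits an alternative root explicitly depending on the unobserved true loss change. To recover the clean, $\Delta L$-independent formula in the theorem statement, one must argue that $\beta^{\ast}$ is chosen to minimize the intrinsic interpolation error of the approximator itself, rather than to chase an exact match with a quantity that is unavailable at iteration $t$ before the backward pass. Geometrically, this corresponds to projecting the origin onto the affine line joining $u$ and $v$. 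Once this interpretation is fixed --- and boundary cases where the unconstrained optimum falls outside $[0,1]$ are handled by clipping to the nearer endpoint --- the remainder of the argument consists of routine algebra and the sign check already sketched above.
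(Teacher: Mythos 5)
Your proposal is correct and arrives at Eq.~\eqref{eq:optimal_beta} by essentially the same mechanism the paper uses: a first-order stationarity condition on a quadratic in~$\beta$. Two points of comparison are worth noting. First, you work from the scalar loss-change error $J(\beta)=\lVert \Delta L-\Delta L'\rVert^{2}$ as defined just before the theorem, whereas the paper's Theorem~\ref{proof:iu_change_approximation} silently switches to a \emph{gradient-space} surrogate $J(\beta)=\lVert\nabla L_t-(\beta\nabla L_{t_k}+(1-\beta)\nabla L_{t-1})\rVert^{2}$, rewrites it in terms of parameter updates $\Delta\theta$, and then announces ``setting $d\tilde J/d\beta=0$ yields the optimal coefficient'' without showing the algebra. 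Both objectives collapse to minimizing $\lVert\beta u+(1-\beta)v\rVert^{2}$ once the current-iteration quantity ($\Delta L(\theta_t,S_i)$ in your version, $\Delta\theta_t$ in theirs) is dropped --- and that drop is exactly the step the paper never justifies. Your ``main obstacle'' paragraph names this explicitly: the true minimizer of either objective depends on an unobservable quantity at selection time, and the clean, $\Delta L$-free formula is the projection of the origin onto the affine line through $u$ and $v$, i.e.\ the minimum-norm convex combination. That observation is a genuine tightening of the paper's argument rather than a reproduction of it, and your convexity and clipping remarks (strict positivity of $\lVert u-v\rVert^{2}$, handling of boundary $\beta$) are likewise absent from the paper. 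One small correction: the identification $\lVert u\rVert^{2}=\delta_{t_k}$ has the wrong sign, since the paper's $\delta$'s are loss \emph{decreases} (the paper writes $\lVert\Delta\theta_{t_k}\rVert^{2}\approx-\eta\,\delta_{t_k}$); the sign is immaterial to the final ratio because it cancels between numerator and denominator, but the vectors $u,v$ should be taken with squared norms $-\eta\,\delta_{t_k}$ and $-\eta\,\delta_{t-1}$ for the embedding to be well-posed.
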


Detailed proofs and analyses regarding the derivation of this optimal coefficient are provided in Theorem~\ref{proof:iu_change_approximation} (Section~\ref{sec:proof}).

Finally, to rigorously evaluate the theoretical guarantees and practical utility of our gradient-based approximation, we establish a formal approximation error bound as follows.






\stitle{Approximation Error Bound.}
We bound the approximation error between the approximated loss $L'$ and the true loss $L$.

\begin{theorem}[Approximation Error Bound]
\label{theorem:loss error bound}
With the optimal weight \(\beta^*\), the error between the approximated loss \(L'\) and the true loss \(L\) satisfies:
\[ \|L'(\theta_t, x)-L(\theta_t, x)\|\leq\epsilon_{\text{taylor}}+\epsilon_{\text{approx}}~,\]

\noindent where:
\begin{itemize}[leftmargin=*, noitemsep]
    \item   $L'(\theta_i, x)=L(\theta_{i-1}, x)+ \Delta L'({\theta}_t, S_t)$
    \item $\epsilon_{taylor} = \frac{1}{2}\eta^2 \cdot \max_{\theta} \|\nabla^2 L(\theta, x)\| \cdot \|\nabla L(S_i, \theta_{i-1})\|^2$ is the error from Taylor expansion.
    \item $\epsilon_{approx} = \eta \cdot \|\nabla L(S_i, \theta_{i-1}) - (\beta^* \cdot \nabla L(S_{i_k}, \theta_{i_k-1}) + (1-\beta^*) \cdot \nabla L(S_{i-1}, \theta_{i-2}))\|^2$ is the error from gradient approximation.
\end{itemize}
\end{theorem}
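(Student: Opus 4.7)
The plan is to obtain the bound via a standard second-order Taylor expansion of $L(\theta_t, x)$ around $\theta_{t-1}$, then isolate the two sources of discrepancy that separate $L'$ from $L$: the curvature term that Taylor expansion discards, and the mismatch between the true gradient at $\theta_{t-1}$ and the weighted historical gradients used inside $\Delta L'$. A triangle-inequality split of these two contributions will recover exactly the decomposition $\epsilon_{\mathrm{taylor}}+\epsilon_{\mathrm{approx}}$ in the statement.

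First I would write the exact update $\theta_t-\theta_{t-1}=-\eta\,\nabla L(S_i,\theta_{t-1})$ used by SGD, and expand
\begin{equation}\small
L(\theta_t,x)=L(\theta_{t-1},x)+\nabla L(\theta_{t-1},x)^{\top}(\theta_t-\theta_{t-1})+\tfrac{1}{2}(\theta_t-\theta_{t-1})^{\top}\nabla^{2}L(\tilde\theta,x)(\theta_t-\theta_{t-1}),
\end{equation}
for some $\tilde\theta$ on the segment between $\theta_{t-1}$ and $\theta_t$. Bounding the quadratic remainder by $\tfrac{1}{2}\eta^{2}\max_{\theta}\|\nabla^{2}L(\theta,x)\|\cdot\|\nabla L(S_i,\theta_{t-1})\|^{2}$ produces the Taylor error $\epsilon_{\mathrm{taylor}}$ in the claim. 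The first-order term can then be rewritten as $-\eta\,\nabla L(\theta_{t-1},x)^{\top}\nabla L(S_i,\theta_{t-1})$, which is the ``ideal'' linearization that $\Delta L'$ is trying to approximate.

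Next I would bring in the definition of $\Delta L'$ from Theorem~\ref{the:iuchange}. By re-expanding the trigonometric expression, the quantity $-\eta\bigl[\beta^{2}\delta_{t_k}+(1-\beta)^{2}\delta_{t-1}+2\beta(1-\beta)\sqrt{\delta_{t_k}\delta_{t-1}}\cos\phi\bigr]$ is exactly $-\eta\,\|\beta^{*}\nabla L(S_{i_k},\theta_{i_k-1})+(1-\beta^{*})\nabla L(S_{i-1},\theta_{i-2})\|^{2}$; this algebraic identification is the bookkeeping step that turns the cosine form into a squared norm of a convex combination of historical gradients. Substituting this into $L'$ and subtracting the first-order term from the Taylor expansion, the gap collapses to $\eta$ times the gradient-approximation residual, which, after bounding using Cauchy--Schwarz, is precisely $\epsilon_{\mathrm{approx}}$. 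Applying the triangle inequality $\|L'-L\|\le\|\text{Taylor remainder}\|+\|\text{gradient mismatch}\|$ then closes the proof.

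The main obstacle I anticipate is the second step: aligning the two distinct representations of $\Delta L'$. Theorem~\ref{the:iuchange} states the approximation in terms of scalar gradient norms $\delta_{t_k},\delta_{t-1}$ and the angle $\phi$, whereas $\epsilon_{\mathrm{approx}}$ is written as a norm of the vector difference $\nabla L(S_i,\theta_{t-1})-(\beta^{*}\nabla L(S_{i_k},\theta_{i_k-1})+(1-\beta^{*})\nabla L(S_{i-1},\theta_{i-2}))$. Bridging these requires being precise about what the optimal $\beta^{*}$ actually minimizes; I would invoke Eq.~\eqref{eq:optimal_beta} to justify treating $\beta^{*}\nabla L(S_{i_k},\theta_{i_k-1})+(1-\beta^{*})\nabla L(S_{i-1},\theta_{i-2})$ as the best linear proxy for $\nabla L(S_i,\theta_{t-1})$, so that the squared-norm residual is genuinely minimized. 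A secondary but minor subtlety is extending the per-subset statement of Theorem~\ref{the:iuchange} to a per-sample bound, which I would handle by applying the expansion pointwise in $x$ and noting that the Hessian bound $\max_{\theta}\|\nabla^{2}L(\theta,x)\|$ is already stated sample-wise in $\epsilon_{\mathrm{taylor}}$.
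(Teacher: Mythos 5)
The paper does not actually supply a proof of Theorem~\ref{theorem:loss error bound}; it states the bound and leaves it as an assertion. So I can only judge your argument on its own terms, and there is a genuine gap at the central step.

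Your Taylor-expansion set-up, the isolation of the second-order remainder as $\epsilon_{\text{taylor}}$, and the intent to decompose via the triangle inequality are all reasonable. The algebraic identification of $\Delta L'$ with $-\eta\,\|\beta^{*}\nabla L(S_{i_k},\theta_{i_k-1})+(1-\beta^{*})\nabla L(S_{i-1},\theta_{i-2})\|^{2}=-\eta\|\hat g\|^{2}$ is also a correct reading of Theorem~\ref{the:iuchange}. The problem is the claim that the remaining gap ``collapses to $\eta$ times the gradient-approximation residual'' and that ``Cauchy--Schwarz'' turns it into $\epsilon_{\text{approx}}=\eta\|g-\hat g\|^{2}$. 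Writing $g=\nabla L(S_i,\theta_{i-1})$ and $\hat g$ for the convex combination, what you actually obtain after subtracting the first-order term (interpreting it batch-averaged so that it becomes $-\eta\|g\|^{2}$) is
\begin{equation}
\eta\bigl(\|g\|^{2}-\|\hat g\|^{2}\bigr)=\eta\,(g-\hat g)^{\top}(g+\hat g)=2\eta\,g^{\top}(g-\hat g)-\eta\|g-\hat g\|^{2}.
\end{equation}
Cauchy--Schwarz bounds this by $\eta\|g-\hat g\|\cdot\|g+\hat g\|$, which is linear, not quadratic, in the gradient mismatch. There is no inequality $\|g\|^{2}-\|\hat g\|^{2}\le\|g-\hat g\|^{2}$ in general, and the optimality of $\beta^{*}$ does not help: it makes $g-\hat g$ orthogonal to $\nabla L(S_{i_k},\theta_{i_k-1})-\nabla L(S_{i-1},\theta_{i-2})$, not to $g$ or $\hat g$, so the first-order cross term $2\eta\,g^{\top}(g-\hat g)$ does not vanish. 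Your proof as written therefore does not establish the stated bound; an honest bound from this route would have to carry the extra term $2\eta\|g\|\|g-\hat g\|$.

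Two secondary points. First, the per-sample first-order Taylor term is $-\eta\,\nabla L(\theta_{i-1},x)^{\top}g$, which only becomes $-\eta\|g\|^{2}$ after averaging over $x\in S_i$; your remark that the Hessian bound is ``already sample-wise'' does not address this, because it is the first-order term (not the remainder) that changes form. You would need either to average the statement over $S_i$, or to add an additional error term $\eta\|\nabla L(\theta_{i-1},x)-g\|\,\|g\|$ to account for the per-sample versus batch-gradient mismatch. Second, the sign conventions around $\delta_{t_k},\delta_{t-1}$ (defined as negative loss decrements, yet placed inside a square root) need to be resolved before the algebraic identity $\Delta L'=-\eta\|\hat g\|^{2}$ can be asserted cleanly; this is inherited from the paper and should be stated explicitly rather than glossed over.
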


\subsection{Adaptive Data Selection via MAB-Integrated Training Scheduler}
\label{sub:mab}

In this section, we propose a novel training scheduler for the \model framework that integrates the Multi-Armed Bandit (MAB) algorithm with our IDU utility function. The scheduler adaptively selects training data clusters based on their evolving informativeness.

\stitle{Step 1: Difficulty-Aware Cluster Selection}. Initially, we set the weights $ W= \{w_1, w_2, ...., w_{K}\}$ for all clusters categorized by difficulty level, where $w_i$ denotes the weight of cluster $C_i$ and $K$ is the number of clusters. To assess the difficulty score of each cluster, we employ the EXP3~\cite{auer2002nonstochastic} algorithm, a well-established method within the MAB framework, for the cluster selection. Specifically, for each iteration $t$, we first calculate the cluster score $DC_t(i)$ of the cluster $C_i$ based on the cluster weight $w_i$, and then select a cluster (arm) $DC_t^*$ with the highest score $DC$. The $DC_t(i)$ can be computed as:

\begin{equation}\small \label{equation_exp_pi}
DC_t(i) = \left(1 - \gamma\right) \frac{w_i^{(t)}}{\sum_{j=1}^{K} w_j^{(t)}} + \frac{\gamma}{K}
\end{equation}

\noindent where $\gamma$ controls the exploration-exploitation trade-off. 

The selected cluster at iteration 
$t$ is the one with the highest probability: $C_{i^*} = \arg\max_{i \in [1,K]} DC_t(i)$.




\stitle{Step 2: Sample Selection with IDU.} After selecting a cluster {$C_i$} with the highest $DC$ score, we apply our previously introduced IDU utility function to sample the most informative subset $B_{C_i}$ within the selected cluster {$C_i$}. {Specifically, we select samples with the highest IDU scores to maximize utility gain at each iteration.}                 

\stitle{Step 3: Model Training and Reward Computation.} 
Using the selected subset $B_{C_i}$, we train the large language model during iteration $t$. Once training is complete, we compute a reward $r_i^{(t)}$ to quantify the model's improvement resulting from the selected samples (Please refer to Section~\ref{sec:loss-driven reward} for details).

\stitle{Step 4: Cluster Weight Updates for Next Round Selection.}
After obtaining the reward $r_i^{(t)}$, we update the cluster weights $w_i^{(t+1)}$ according to EXP3 update rule: 

\begin{equation} \small \label{equation_exp_wi}
      w_i^{(t+1)} = \begin{cases}
        w_i^{(t)} \exp\left(\frac{\gamma}{K} \frac{r^{(t)}}{DC_t{(i)}}\right), & i = i_t \\
        w_i^{(t)}, & \text{otherwise} \end{cases}
\end{equation}

This adaptive weight-update mechanism ensures clusters that consistently yield high utility are progressively favored in subsequent iterations, achieving adaptive training data selection.


\subsection{Self-Guided IDU-Based Reward}
\label{sec:loss-driven reward}


An effective reward function is critical to guiding effective cluster selection within the MAB framework. Ideally, such a reward should precisely capture each cluster's direct contribution to model improvement, while remaining computationally efficient and fully integrated into the training process.

To achieve this, we propose a \textit{Self-Guided IDU-Based Reward}, leveraging our previously defined IDU utility to efficiently quantify each cluster's contribution to model improvement without additional inference overhead.
%
Formally, the reward for training on cluster $C_i$ at iteration $t$ is computed as:

\begin{equation} \small
r_i^{(t)}= InfoGain(C_i, t) = \mathbb{E}_{x_i \in C_i}\left[IDU(\theta_{t-1}, x_i) - IDU(\theta_{t}, x_i)\right],
\end{equation}

\noindent where $\theta_{t-1}$ and $\theta_{t}$ represent the model parameters before and after training, respectively. To maintain numerical stability and consistent scaling, rewards are further normalized to the range $[-1, 1]$ via min-max normalization.

Compared to traditional reward designs~\cite{chai2022selective}, our self-guided reward naturally integrates into the standard training loop, accurately reflects dynamic model improvements at no additional inference cost, and significantly simplifies the reward computation.

\section{Theoretical Guarantees}
\label{sec:proof}

In this section, we analyze the theoretical guarantees of our IDU utility and the \model framework.

\subsection{Optimal Smoothing Coefficient} \label{sec:idu_approx}
We now analyze the optimal smoothing coefficient for the budget-constrained IDU optimization (\textsc{Problem}~\ref{def:idu}, presented in Section~\ref{sec:Instance-Level Dynamic Uncertainty}).



\begin{theorem}[\textbf{Optimal Smoothing Coefficient}]
\label{proof:optimal smoothing coefficient}
The optimal smoothing coefficient $b^*$ that maximizes the cumulative utility gain under the budget constraint is:
\begin{equation}\small
b^* = 1 - \frac{B}{n_0 T \cdot (1 + \text{CV}^2)}
\end{equation}

\noindent where $n_0 = \alpha \cdot \overline{|C|}$ is the expected batch size without smoothing and heterogeneity effects, $B$ is the total budget, $T$ is the number of training steps, and $\text{CV}^2$ quantifies cluster size variability.
\end{theorem}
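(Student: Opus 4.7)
The plan is to treat this as a standard constrained optimization and solve it via the Lagrangian / KKT conditions. First I would form the Lagrangian $\mathcal{L}(b, T, \lambda) = \sum_{t=1}^{T} \mathbb{E}[\Delta IDU_t] - \lambda\bigl(\sum_{t=1}^{T}\mathbb{E}[n_t] - B\bigr)$ with multiplier $\lambda \geq 0$ attached to the budget inequality. The key structural observation I would establish is that the per-iteration utility gain $\mathbb{E}[\Delta IDU_t]$ is monotonically non-decreasing in $\mathbb{E}[n_t]$: by the recursive definition in Eq.~\eqref{def:utility function} combined with the gradient-based change term from Theorem~\ref{the:iuchange}, each additional selected high-IDU sample contributes a non-negative expected reduction in average IDU. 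Consequently, any feasible point that leaves the budget unsaturated can be improved by shrinking $b$ infinitesimally, which strictly enlarges $\mathbb{E}[n_t]$ through the factor $(1-b)$ and hence the objective.

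Having argued monotonicity, I would invoke complementary slackness: since $\lambda > 0$, the budget constraint binds, i.e.\ $\sum_{t=1}^{T} \mathbb{E}[n_t] = B$. Because the factors $\alpha$, $|\overline{C}|$, $\mathrm{CV}^2$, and $b$ are all time-invariant within the horizon, the sum collapses to
\begin{equation}\small
T \cdot \alpha (1-b) |\overline{C}| (1+\mathrm{CV}^2)(1+\mathcal{O}(\gamma)) \;=\; B.
\end{equation}
Solving for $(1-b)$ and dropping the higher-order exploration term $\mathcal{O}(\gamma)$ introduced by the EXP3 scheduler (to be absorbed into the overall approximation error in Theorem~\ref{theorem:loss error bound}) yields $1-b^* = B / \bigl(n_0 T (1+\mathrm{CV}^2)\bigr)$ with $n_0 = \alpha \overline{|C|}$, which rearranges exactly to the claimed expression for $b^*$.

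To close the argument I would perform two sanity checks. First, I would verify feasibility of the smoothing coefficient, namely $0 \leq b^* < 1$, which translates into the lower bound $T > B / \bigl(n_0(1+\mathrm{CV}^2)\bigr)$ on the number of training rounds; this naturally produces the companion corollary bounding the minimum training horizon. Second, I would confirm via a second-order condition that the stationary point is a maximum of $\mathcal{L}$ in $b$ — this follows since the objective is increasing in $(1-b)$ while the constraint is linear in $(1-b)$, so the Lagrangian is concave in $b$ along the feasible ray and the KKT point is globally optimal.

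The main obstacle I anticipate is not the algebra but rigorously establishing the monotonicity $\partial \mathbb{E}[\Delta IDU_t]/\partial \mathbb{E}[n_t] \geq 0$ that makes the constraint bind. This requires a careful accounting of three competing effects embedded in the IDU recursion: the raw loss term $L(\theta_{t-1},x)$ concentrated on high-uncertainty samples, the gradient-based change term whose sign depends on $\cos\phi$ in Theorem~\ref{the:iuchange}, and the historical smoothing term $b \cdot IDU(\theta_{t-2}, x)$ that acts as a shrinkage regularizer on the selection threshold. Quantifying that the net marginal contribution of an extra sample is non-negative — and handling the $\mathcal{O}(\gamma)$ MAB noise term cleanly enough to justify dropping it in the final closed form — is the technical heart of the argument; everything else reduces to one application of complementary slackness.
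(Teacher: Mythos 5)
Your proposal follows essentially the same route as the paper: both reduce the problem to a Lagrangian in $b$ with the budget as the sole binding constraint (Step~4 of the paper's derivation, after Lemmas~\ref{lemma:Utility Change Decomposition} and~\ref{proof:expected sample size} supply the closed forms for $\mathbb{E}[\Delta IDU_t]$ and $\mathbb{E}[n_t]$), argue the constraint must be tight, and then read $b^*$ off from the saturated budget equation after discarding the $\mathcal{O}(\gamma)$ factor. The only stylistic difference is that you justify tightness directly via a monotonicity/improvement argument (if budget is slack, shrink $b$ to enlarge $(1-b)$ and strictly improve the objective), whereas the paper first computes $\partial\mathcal{L}/\partial b = 0 \Rightarrow 2\bar{\eta}\delta(1-b)=\lambda$ and then asserts $\lambda\neq 0$; these are interchangeable justifications for the same complementary-slackness step, and you correctly flag that the monotonicity (sign of the marginal contribution per extra sample, which in the paper's notation is the sign of $\bar{\eta}\delta$) is the one assumption that has to be carried through.
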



Under a total budget $B$, we propose the optimization problem:
\begin{align} \small
& \max_{b,T} \sum_{t=1}^{T} \Delta IDU_t, \quad   \text{s.t.} \sum_{t=1}^{T} n_t \leq B 
\end{align} 

The overall goal is to maximize the cumulative utility gain, and the cumulative utility gain depends on the  $\Delta IDU_t(x)$ of each round. \begin{align} \small R^{(t)} = \Delta IDU_t &= \sum_{x \in S_t} \left(IDU(\theta_t, x) - IDU(\theta_{t-1},  x)\right) \end{align} 

We take $\Delta IDU_t(x)$ of each round as the reward of the current round to guide the selection of new groups in the next round.

As the selection rounds typically exceed 5, the utility-based reward for cluster \( C_t \) simplifies to:


\begin{equation} \label{euq_IDUt}
    R^{(t)} = \Delta IDU_t = -(1-b)\eta_t|S_t|\Psi_t. 
\end{equation}

The specific simplification process can be referred to as Lemma~\ref{lemma:Utility Change Decomposition}.
Here $\Delta IDU_t$ depends on the size of $|S_t|=n_t$. Therefore, before estimating $\Delta IDU_t$, we need to estimate $n_t$. We get it in four steps.

\stitle{Step 1: Estimate sample size selected in the t-th round $n_t$}.
The probability of all clusters being selected in the initial round is the same, so the clusters are randomly selected in the first round. According to the Eq.~\eqref{equation_exp_wi} and Eq.~\eqref{equation_exp_pi}, which cluster is selected in the next round depends on which cluster was selected in the previous round. 
So we can only estimate the expectation of $n_t$.
Then $\mathbb{E}[n_t]$ can be simplified as follows (see Lemma~\ref{proof:expected sample size} for details):

\begin{equation}\small \label{equ_Ent}
\mathbb{E}[n_t] = \alpha \cdot (1-b) \cdot \frac{\sum_{i=1}^K |C_i|^2}{\sum_{i=1}^K |C_i|} \cdot \left(1 + \mathcal{O}(\gamma)\right)
\end{equation}

\stitle{Step 2: Estimate the expectation of utility gain  $\Delta IDU_t$.}
Since the utility gain  $\Delta IDU_t$ in the $T-th$ round depends on nt, and for $n_t$, due to the randomness of the MAB when selecting the cluster, we can only estimate the expectations. Therefore, it is necessary to further solve the expectations of  $\Delta IDU_t$. According to the Eq.~\eqref{euq_IDUt} and Eq.~\eqref{equ_Ent}, we can further obtain $\mathbb{E}[\Delta IDU_t]$.
\begin{align}\small
\sum_{t=1}^T \mathbb{E}[\Delta IDU_t] &= -\sum_{t=1}^T (1-b)\eta_t \cdot \mathbb{E}[|S_t|\Psi_t] \\
&= -n_0 \cdot (1-b)^2 \cdot (1 + \text{CV}^2) \cdot \sum_{t=1}^T \eta_t \delta_t \small
\end{align}
where $n_0 = \alpha \cdot \overline{|C|}$ represents the expected sample size without smoothing,  $\mathbb{E}[\Psi_t \cdot |S_t|] = \delta_t \cdot \mathbb{E}[n_t]$, $\delta_t$ represents the average per-sample utility contribution. 

\stitle{Step 3: Redefine objective and constrained condition.} Having derived the expected sample size and utility gain, we now reformulate our optimization problem by incorporating these expectations. 

\begin{align} \small
& \max_{b,T} \sum_{t=1}^{T} \mathbb{E}[\Delta IDU_t], \quad   \text{s.t.} \sum_{t=1}^{T} \mathbb{E}[n_t] \leq B \\
&\textit{where} \quad \mathbb{E}[n_t] = \alpha \cdot (1-b) \cdot |\overline{C}| \cdot (1+\text{CV}^2) \cdot (1+\mathcal{O}(\gamma))
\end{align} 

Let $\bar{\eta} \delta = \frac{1}{T}\sum_{t=1}^T \eta_t \delta_t$,
The budget constraint becomes:
\begin{align}\small
\sum_{t=1}^T \mathbb{E}[n_t] &= \sum_{t=1}^T n_0 \cdot (1-b) \cdot (1 + \text{CV}^2) \leq B \small 
\end{align}

\stitle{Step 4: Solving optimal $b^*$ and $T^*$.}
We formulate the Lagrangian:
\begin{align}
\small
\mathcal{L}(b,\lambda) &= \mathbb{E}[\Delta IDU_t] - \lambda(\mathbb{E}[n_t]-B) \\
&= -n_0 \cdot T \cdot \bar{\eta} \delta \cdot (1-b)^2 \cdot (1 + \text{CV}^2) + \\ & \lambda(n_0 \cdot T \cdot (1-b) \cdot (1 + \text{CV}^2) - B)
\end{align}
Taking the partial derivative with respect to $b$ and setting it to zero:
\begin{align}\small
\frac{\partial \mathcal{L}}{\partial b} = 0 
&\Rightarrow 2\bar{\eta} \delta \cdot (1-b) = \lambda
\end{align}

The complementary slackness condition states $\lambda(n_0 \cdot T \cdot (1-b) \cdot (1 + \text{CV}^2) - B) = 0$. Since $\lambda \neq 0$ (as verified by the optimality condition), the budget constraint must be tight:
\begin{equation}\small
n_0 \cdot T \cdot (1-b) \cdot (1 + \text{CV}^2) = B \Rightarrow
b^* = 1 - \frac{B}{n_0 \cdot T \cdot (1 + \text{CV}^2)}
\end{equation}

We require $0 \leq b^* < 1$, which implies:
\begin{equation} \small
T_{\min} = \left\lceil\frac{B}{n_0 \cdot (1 + \text{CV}^2)}\right\rceil + 1 
\end{equation}

\begin{lemma}[\textbf{Batch Utility Change Decomposition}]
\label{lemma:Utility Change Decomposition}
The utility change for batch $S_t$ under the smoothed utility function can be expressed as:
\begin{equation}\small
\Delta IDU_t = 
\begin{cases} 
-(1-b)\eta_t |S_t| \Psi_t + b |S_t| \delta_{t-1}(1 - b^{t-1}), & t \leq 5 \\
-(1-b)\eta_t |S_t| \Psi_t, & t > 5 
\end{cases}
\end{equation}
where $\Psi_t$ denotes the gradient alignment term:
\begin{equation}\small
\Psi_t = \beta_t^2 \delta_{t_k} + (1-\beta_t)^2 \delta_{t-1} + 2\beta_t(1-\beta_t)\sqrt{\delta_{t_k}\delta_{t-1}}\cos\phi_t
\end{equation}
\end{lemma}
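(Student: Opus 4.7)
The plan is to expand $\Delta IDU_t = \sum_{x \in S_t}[IDU(\theta_t, x) - IDU(\theta_{t-1}, x)]$ by unrolling the recursive definition of IDU from Eq.~\eqref{def:utility function}, substituting the gradient-based approximation of Theorem~\ref{the:iuchange} to express $\Delta L'$ in terms of $\Psi_t$, and then factoring out the batch-level quantities to recover the claimed closed form.

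First, I would fix a sample $x \in S_t$ and write the two consecutive IDU values by advancing the recursion one step: $IDU(\theta_t, x) = (1-b)[L(\theta_t, x) + \Delta L'(\theta_{t+1}, x)] + b\cdot IDU(\theta_{t-1}, x)$, while $IDU(\theta_{t-1}, x)$ is given directly by Eq.~\eqref{def:utility function}. Subtracting the two expressions partially cancels the loss and recurrence terms, and the key simplification comes from invoking Theorem~\ref{the:iuchange}, which yields $\Delta L'(\theta_t, \cdot) \approx -\eta_t \Psi_t$ at the batch level. Because $\Delta L'$ is defined as a batch average, summing over $x \in S_t$ converts the leading $(1-b)$ contribution into $-(1-b)\eta_t |S_t| \Psi_t$, which is the common term in both branches of the lemma.

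Next, I would account for the residual generated by the $b$-weighted historical chain. Unrolling the IDU recursion backwards $t-1$ times produces a geometric sum of past loss changes together with an initial-condition remainder of order $b^{t-1} \cdot IDU(\theta_0, \cdot)$. Using the partial sum $\sum_{j=0}^{t-2} b^j = (1 - b^{t-1})/(1-b)$ and identifying $\delta_{t-1}$ as the previous iteration's average per-sample loss change, the $b$-weighted residual telescopes to exactly $b |S_t| \delta_{t-1} (1 - b^{t-1})$, which is precisely the transient term appearing in the $t \leq 5$ branch.

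The main obstacle will be justifying the piecewise split at $t = 5$, since this threshold is not intrinsic to the algebra but to the magnitude of the residual. The cleanest route is to first prove the full expression (including the transient) for every $t$, and then argue that for $t > 5$ the factor $b^{t-1}$ is small enough, under the operating range of $b$ derived in Theorem~\ref{proof:optimal smoothing coefficient}, that the transient is dominated by the higher-order approximation error already absorbed in Theorem~\ref{theorem:loss error bound}. The threshold $t = 5$ then appears as a conservative burn-in beyond which the residual falls within the existing error budget, leaving only the leading $-(1-b)\eta_t |S_t| \Psi_t$ term.
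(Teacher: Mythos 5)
Your proposal follows essentially the same route as the paper's proof: unroll the IDU recursion into a geometric partial sum plus an initial-condition remainder, invoke Theorem~\ref{the:iuchange} to turn the batch-averaged loss change into $-\eta_t\Psi_t$, collapse the historical tail via the partial geometric sum into the $b|S_t|\delta_{t-1}(1-b^{t-1})$ transient, and drop the $b^{t-1}$-weighted terms once $t>5$ on exponential-decay grounds. Your only substantive addition is the explicit observation that the $t=5$ cutoff is a heuristic burn-in rather than an algebraic boundary, which is an accurate reading of what the paper's proof is implicitly doing when it declares the transient ``negligible.''
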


\begin{proof}
For any $x \in S_t$, $\Delta IDU_t(x)$ can be decomposed as:
\begin{align}\small
\Delta IDU_t(x) &= (1-b)\Delta L(\theta_t, x) + b(1-b)\sum_{k=0}^{t-3} b^{k} \Delta L(\theta_{t-2-k}, x) \nonumber \small \\
&\quad + (1-b)b^{t-1}IDU(\theta_0, x) \small 
\end{align}


For the historical cumulative terms when $t \leq 5$, we apply finite-order approximation:
\begin{equation}\small
\sum_{k=0}^{t-3} b^{k} \Delta L(\theta_{t-2-k}, x) \approx \delta_{t-1}\frac{1 - b^{t-2}}{1 - b}
\end{equation}

The initial utility term $IDU(\theta_0, x)$ becomes a constant $C_0$ after aggregation. Summing over batch $S_t$ gives:
\begin{align}\small
\Delta IDU_t &= -(1-b)\eta_t |S_t|\Psi_t + b|S_t|\delta_{t-1}(1-b^{t-1}) \nonumber \\
&\quad + (1-b)b^{t-1}|S_t|C_0
\end{align}

When $t > 5$, the exponential decay term $b^{t-1}$ becomes negligible:
\begin{equation}\small
\Delta IDU_t \approx -(1-b)\eta_t |S_t|\Psi_t \quad 
\end{equation}
\end{proof}

\begin{lemma}[\textbf{Expected Sample Size Under MAB mechanism}]
\label{proof:expected sample size}
In the MAB framework using EXP3 for cluster selection with smoothed utility, the expected sample size per round $\mathbb{E}[n_t]$ satisfies:
\begin{equation}\small
\mathbb{E}[n_t] = \alpha \cdot (1-b) \cdot \overline{|C|} \cdot (1 + \text{CV}^2) \cdot \left(1 + \mathcal{O}(\gamma)\right)
\end{equation}
where $\alpha$ is the sampling rate, $b$ is the smoothing coefficient, $|C_i|$ is the size of cluster $i$, and $\gamma$ is the exploration rate in function \ref{equation_exp_pi}.
\end{lemma}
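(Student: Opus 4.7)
The plan is to derive $\mathbb{E}[n_t]$ by conditioning on which cluster is chosen at round $t$, computing the conditional expected batch size, and then taking expectation over the EXP3 selection probabilities. I will handle the three factors $\alpha$, $(1-b)$, and $\overline{|C|}(1+\text{CV}^2)$ separately, treating the exploration term $\gamma/K$ as a perturbation that produces the $(1+\mathcal{O}(\gamma))$ multiplier.

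First, I would analyze the \emph{within-cluster} sampling. Conditional on cluster $C_i$ being picked, the fine-grained selector in Stage~2 retains the top-scoring samples by IDU. Because IDU mixes the fresh signal with weight $(1-b)$ and the inherited historical estimate with weight $b$, only a $(1-b)$ fraction of each sample's current IDU value actually reflects newly refreshed information, so the effective population of samples whose scores can plausibly exceed the selection threshold at round $t$ scales like $(1-b)\,|C_i|$. Combined with the sampling rate $\alpha$, this yields
\begin{equation}\small
\mathbb{E}[\,n_t \mid C_i \text{ selected}\,] \;=\; \alpha \cdot (1-b) \cdot |C_i|.
\end{equation}
This mirrors the role played by $(1-b)$ in the batch utility decomposition of Lemma~\ref{lemma:Utility Change Decomposition}, so the same scaling is internally consistent with the reward model used throughout Section~\ref{sec:Instance-Level Dynamic Uncertainty}.

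Next, I would apply the law of total expectation with the EXP3 probabilities $p_i^{(t)}=(1-\gamma)\,w_i^{(t)}/\sum_j w_j^{(t)} + \gamma/K$ from Eq.~\eqref{equation_exp_pi}:
\begin{equation}\small
\mathbb{E}[n_t] \;=\; \alpha\,(1-b)\sum_{i=1}^K p_i^{(t)}\,|C_i|.
\end{equation}
The key step is to characterize $\sum_i p_i^{(t)}|C_i|$ after a few rounds of EXP3. Using Eq.~\eqref{euq_IDUt}, the reward accumulated by arm $i$ scales with the size of the batch drawn from $C_i$, which is proportional to $|C_i|$. A standard EXP3 argument (bounding the log-partition function of the exponential weights by its first-order expansion in the observed rewards) shows that the exploitation probabilities $w_i^{(t)}/\sum_j w_j^{(t)}$ converge to being proportional to $|C_i|$, up to an $\mathcal{O}(\gamma)$ correction from the uniform exploration floor. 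Consequently
\begin{equation}\small
\sum_{i=1}^K p_i^{(t)}|C_i| \;=\; \frac{\sum_i |C_i|^2}{\sum_i |C_i|}\bigl(1+\mathcal{O}(\gamma)\bigr).
\end{equation}

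Finally, I would apply the algebraic identity $\tfrac{\sum_i |C_i|^2}{\sum_i |C_i|} = \overline{|C|}\,(1+\text{CV}^2)$, which follows directly from the definition $\text{CV}^2 = \frac{1}{K}\sum_i (|C_i|-\overline{|C|})^2/\overline{|C|}^2$ by expanding the square and dividing by $K\overline{|C|}$. Substituting back gives the claimed formula $\mathbb{E}[n_t] = \alpha(1-b)\,\overline{|C|}\,(1+\text{CV}^2)(1+\mathcal{O}(\gamma))$. The hardest part will be rigorously justifying the size-biased form of the EXP3 marginal in the third step: the honest argument requires tying the per-cluster reward magnitude to $|C_i|$ (which is where the reward simplification in Eq.~\eqref{euq_IDUt} is used) and then invoking a concentration or regret bound so that the exploitation weights track the reward ratios; everything else reduces to bookkeeping.
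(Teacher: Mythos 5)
Your overall skeleton matches the paper's proof: condition on the cluster chosen by EXP3, argue that the steady-state exploitation probabilities become approximately size-biased ($p_i^{(t)}\propto |C_i|$ up to an $\mathcal{O}(\gamma)$ floor), and close with the algebraic identity $\frac{\sum_i |C_i|^2}{\sum_i |C_i|} = \overline{|C|}\,(1+\text{CV}^2)$. Those pieces, and the remark that the size-biased fixed point is the hand-wavy part, are the same in both arguments.

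Where you diverge — and where there is a genuine gap — is in how the factor $(1-b)$ enters. You place it in the conditional batch size by asserting $\mathbb{E}[n_t \mid C_i \text{ selected}] = \alpha(1-b)|C_i|$, justified by a heuristic that the smoothing coefficient shrinks the ``effective population'' of samples whose IDU can clear the Stage-2 threshold. That reasoning does not track the algorithm: the fine-grained selector keeps the top $n_i^t = \alpha|C_i|$ samples of the chosen cluster \emph{by count}, independently of $b$. The smoothing coefficient changes the IDU \emph{values} and therefore the ranking, but it does not thin the number of samples selected. So your first step does not follow, and without it the $(1-b)$ never appears in your calculation. The paper instead routes $(1-b)$ through the reward magnitude: Lemma~\ref{lemma:Utility Change Decomposition} gives $R_i^{(t)} = \Delta IDU_t = -(1-b)\eta_t |S_t|\Psi_t \propto (1-b)|C_i|$, and this scaling is carried into the EXP3 weights, so the $(1-b)$ surfaces inside $p_i^{(t)}$ while the conditional batch size stays exactly $\alpha|C_i|$. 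That is the dependence the lemma statement is actually built on. To repair your plan you would either need to import the reward-proportionality argument from Lemma~\ref{lemma:Utility Change Decomposition} and push $(1-b)$ into the selection probabilities as the paper does, or supply a concrete mechanism by which $b$ reduces the per-round sample \emph{count} — the algorithm as specified gives you none.
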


\begin{proof}
We analyze cluster selection probabilities in the EXP3 algorithm when used with our smoothed utility rewards. The reward signal for selecting cluster $i$ at time $t$ is:
\begin{equation}\small
R_i^{(t)} = \Delta IDU_t \propto (1-b)|C_i|
\end{equation}

This relationship follows directly from Lemma \ref{lemma:Utility Change Decomposition}. Since $|S_t|$ is proportional to cluster size $|C_i|$ when cluster $i$ is selected, and assuming $\Psi_t$ and $\eta_t$ are approximately constant across clusters, we derive $R_i^{(t)} \propto (1-b)|C_i|$.

From the weight update Eq.~\eqref{equation_exp_pi} and Eq.~\eqref{equation_exp_wi} in the MAB EXP3 algorithm.
As the algorithm converges to steady state, the weights stabilize such that:
\begin{equation}\small
\frac{w_i^{(t)}}{\sum_{j=1}^K w_j^{(t)}} \propto \exp\left(\sum_{\tau=1}^{t-1}\frac{\gamma}{K}\frac{R_i^{(\tau)}}{p_i^{(\tau)}}\right)
\end{equation}


In the fully converged regime, assuming small $\gamma$ and $\epsilon$, and sufficiently heterogeneous cluster sizes, we can derive a fixed-point equation. At this fixed point, the ratio $\frac{R_i^{(t)}}{p_i^{(t)}}$ becomes approximately constant across arms, leading to:
\begin{equation}\small
p_i^{(t)} \approx \frac{(1-\gamma)(1-b)|C_i|}{\sum_{j=1}^K (1-b)|C_j|} + \frac{\gamma}{K} \approx \frac{(1-b)|C_i|}{\sum_{j=1}^K |C_j|} + \mathcal{O}(\gamma)
\end{equation}


The expected sample size in round $t$ is:
\begin{align}\small
\mathbb{E}[n_t] &= \alpha \sum_{i=1}^K p_i^{(t)} |C_i| = \alpha (1-b) \frac{\sum_{i=1}^K |C_i|^2}{\sum_{j=1}^K |C_j|} + \alpha \cdot \mathcal{O}(\gamma) \sum_{i=1}^K |C_i| 
\end{align}

Since $\sum_{i=1}^K |C_i| = N$ (total dataset size), we can express this as:
\begin{equation}\small
\mathbb{E}[n_t] = \alpha \cdot (1-b) \cdot \frac{\sum_{i=1}^K |C_i|^2}{\sum_{i=1}^K |C_i|} \cdot \left(1 + \mathcal{O}(\gamma)\right)
\end{equation}

Let $\overline{|C|} = \frac{1}{K}\sum_{i=1}^K |C_i|$ be the average cluster size. Using the relation between variance and second moment:


Substituting into our expected sample size formula:
\begin{equation}\small
\mathbb{E}[n_t] = \alpha \cdot (1-b) \cdot \overline{|C|} \cdot (1 + \text{CV}^2) \cdot \left(1 + \mathcal{O}(\gamma)\right)
\end{equation}

\end{proof}

\subsection{\mbox{Loss Changes in Gradient-Based Approximation}} \label{sec:loss_change}




Recap that we have introduced utility function Eq.~\eqref{def:utility function} in Section~\ref{sec:pre_idu}, In this section, we try to approximate the loss reduction $\Delta L'(\theta_t,x)$.

\begin{theorem}[\textbf{IU Change Approximation}]
\label{proof:iu_change_approximation}
For any sample set $S_t$, the average uncertainty change $\Delta L'(\theta_t, S_t)$ when model parameters update from $\theta_{t-1}$ to $\theta_t$ can be approximated as:
\begin{align} \small
    \delta_t &\equiv\Delta L'(\theta_t, S_t) \\
    &= -\eta\Big[\beta^2\delta_{t_k} + (1-\beta)^2\delta_{t-1}  + 2\beta(1-\beta)\sqrt{\delta_{t_k}\delta_{t-1}}\cos\phi\Big]
\end{align}
where $\phi$ is the angle between parameter update directions $\Delta\theta_{t_k}$ and $\Delta\theta_{t-1}$, with $\cos\phi = \frac{\Delta\theta_{t_k}^{\top}\Delta\theta_{t-1}}{\|\Delta\theta_{t_k}\|\|\Delta\theta_{t-1}\|}$.

\begin{equation} \small
\beta^* = \frac{\delta_{t-1} - \sqrt{\delta_{t_k}\delta_{t-1}}\cos\phi}{\delta_{t_k} + \delta_{t-1} - 2\sqrt{\delta_{t_k}\delta_{t-1}}\cos\phi}
\end{equation}
\end{theorem}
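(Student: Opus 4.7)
The plan is to derive the approximation formula in three linked steps and then solve a simple quadratic minimization for $\beta^*$. First, I would apply a first-order Taylor expansion of $L(\theta_t, x)$ around $\theta_{t-1}$, writing $L(\theta_t, x) - L(\theta_{t-1}, x) \approx \nabla L(\theta_{t-1}, x)^{\top}(\theta_t - \theta_{t-1})$ up to an $O(\|\theta_t - \theta_{t-1}\|^2)$ remainder, and averaging over $x \in S_t$. Substituting the SGD update $\theta_t - \theta_{t-1} = -\eta\,\nabla L(\theta_{t-1}, S_t)$ then turns the averaged loss change into $-\eta\|\nabla L(\theta_{t-1}, S_t)\|^2$, modulo the Taylor remainder that is later absorbed into $\epsilon_{\text{taylor}}$ of Theorem~\ref{theorem:loss error bound}.

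Second, since evaluating $\nabla L(\theta_{t-1}, S_t)$ on the current batch is precisely the step we wish to avoid (it would require an extra pass with $\theta_{t-1}$ on $S_t$ that is not yet formed), I would approximate it by the convex combination $\beta\,g_{t_k} + (1-\beta)\,g_{t-1}$, where $g_{t_k}$ and $g_{t-1}$ are two gradients already computed in previous iterations and cached. Expanding $\|\beta g_{t_k} + (1-\beta)g_{t-1}\|^2$ immediately yields the three-term expression in the statement, identifying $\delta_{t_k} = \|g_{t_k}\|^2$ and $\delta_{t-1} = \|g_{t-1}\|^2$ as the gradient-norm shorthands, while the cross term reduces to $2\beta(1-\beta)\sqrt{\delta_{t_k}\delta_{t-1}}\cos\phi$ with $\cos\phi = g_{t_k}^{\top}g_{t-1}/(\|g_{t_k}\|\|g_{t-1}\|)$, which equals the cosine between the parameter update directions because the updates are proportional to the negated gradients.

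Third, for the optimal weight $\beta^*$ I would define $J(\beta) = \|\Delta L(\theta_t, S_t) - \Delta L'(\theta_t, S_t)\|^2$, observe that this reduces (after the Taylor and SGD substitutions above) to a univariate quadratic in $\beta$ whose leading shape is exactly the three-term expression, and set $dJ/d\beta = 0$. This produces a linear equation $2\beta(\delta_{t_k} + \delta_{t-1} - 2\sqrt{\delta_{t_k}\delta_{t-1}}\cos\phi) = 2(\delta_{t-1} - \sqrt{\delta_{t_k}\delta_{t-1}}\cos\phi)$, whose solution is the claimed
\begin{equation*}
\beta^* = \frac{\delta_{t-1} - \sqrt{\delta_{t_k}\delta_{t-1}}\cos\phi}{\delta_{t_k} + \delta_{t-1} - 2\sqrt{\delta_{t_k}\delta_{t-1}}\cos\phi}.
\end{equation*}
Uniqueness and minimality follow because the denominator equals $\|g_{t_k} - g_{t-1}\|^2 \geq 0$ (Gram matrix positive semidefiniteness), so $J$ is convex in $\beta$ and this stationary point is its unique minimizer (whenever the two historical gradients are not identical).

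The main obstacle I expect is not the algebra but justifying the substitution of $\nabla L(\theta_{t-1}, S_t)$ by a convex combination of two \emph{different-batch, different-parameter} historical gradients: without additional regularity (smoothness of $\nabla L$ in $\theta$, bounded across-batch gradient variance, or temporal continuity of the gradient field along the training trajectory), this step has no a priori guarantee and the approximation quality may degrade sharply between widely spaced iterations $t_k$ and $t$. I would therefore state the theorem modulo an explicit residual that collects precisely this discrepancy, and then defer its control to the companion bound in Theorem~\ref{theorem:loss error bound}, where it appears as $\epsilon_{\text{approx}}$ alongside the Taylor remainder $\epsilon_{\text{taylor}}$, making the approximation statement rigorous in the sense of ``optimal among affine combinations of the two cached gradients.''
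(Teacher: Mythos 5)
Your proposal follows essentially the same three-step path as the paper's own proof: (1) first-order Taylor expansion of $L(\theta_t,\cdot)$ at $\theta_{t-1}$ plus the SGD step $\theta_t-\theta_{t-1}=-\eta\nabla L(\theta_{t-1},S_t)$ to obtain $\Delta L'(\theta_t,S_t)\approx-\eta\|\nabla L(\theta_{t-1},S_t)\|^2$; (2) replacing that gradient by the convex combination $\beta\,\nabla L(S_{t_k},\theta_{t_k-1})+(1-\beta)\nabla L(S_{t-1},\theta_{t-2})$ and expanding the squared norm; (3) solving $\tfrac{dJ}{d\beta}=0$ to get $\beta^*$. Your added observation that the denominator of $\beta^*$ equals a squared distance between the two cached gradient (or update) directions, hence is nonnegative and makes $J$ convex with a unique minimizer, is a small tightening the paper leaves implicit, and your explicit flag that the convex-combination substitution has no a priori guarantee without temporal regularity of the gradient field (deferring it to $\epsilon_{\text{approx}}$ in the error-bound theorem) is exactly the honest caveat the paper silently relies on.

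One small imprecision worth fixing: you write $J(\beta)=\|\Delta L(\theta_t,S_t)-\Delta L'(\theta_t,S_t)\|^2$ and say it ``reduces to a univariate quadratic,'' but with $\Delta L'=-\eta\|\beta g_{t_k}+(1-\beta)g_{t-1}\|^2$ that scalar squared error is quartic in $\beta$, not quadratic. The stationarity equation you actually solve corresponds to the \emph{gradient-level} objective $J(\beta)=\|\nabla L_t-(\beta\,\nabla L_{t_k}+(1-\beta)\,\nabla L_{t-1})\|^2$ (equivalently, after multiplying by $\eta^{-2}$, the error in parameter-update directions), which is the objective the paper's proof explicitly minimizes. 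State that objective directly; the algebra and the resulting $\beta^*$ are then clean and unambiguous.
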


\stitle{Step 1: Simplify the loss change.} 
Assume at iteration $t$, model parameters are updated via gradient descent:
$\theta_t = \theta_{t-1} - \eta_t\nabla L(S_t, \theta_{t-1})$, where $\nabla L(S_t, \theta_{t-1}) = \frac{1}{|S_t|}\sum_{x\in S_t}\nabla L(x, \theta_{t-1})$ is the average gradient of subset $S_t$.
For each sample $x \in S_t$, the loss function $L(\theta, x)$ is expanded using first-order Taylor expansion at $\theta_{t-1}$:
\begin{equation}\small
    L(\theta_t, x) \approx L(\theta_{t-1}, x) + \nabla L(\theta_{t-1}, x)^{\top}(\theta_t - \theta_{t-1})
\end{equation}


Averaging over all samples in $S_t$:
\begin{align}\small
 \delta_t=\Delta L'(\theta_t, S_t) 
    &\approx -\eta_t\frac{1}{|S_t|}\sum_{x\in S_t}\nabla L(\theta_{t-1}, x)^{\top}\nabla L(\theta_{t-1}, S_t)\\
    &= -\eta_t\|\nabla L(\theta_{t-1}, S_t)\|^2
\end{align}
It can be concluded that the loss reduction is related to the gradient.

\stitle{Step 2: Approximate the gradient.} 
To further approximate the loss, we need to approximate the gradient. 
Here we consider that the gradient at the current moment is related to the gradient at the previous moment and the gradient when the cluster used at the current moment was first selected.
\begin{equation} \small
    \nabla L'(S_t,\theta_{t-1}) \equiv  \beta \cdot \nabla L(S_{t_k}, \theta_{t_k-1}) + (1-\beta) \cdot \nabla L(S_{t-1},\theta_{t-2}),
\end{equation} where $t_k$ is the most recent step when $C_k$ was previously selected, $C_k$ is the cluster selected at step $t$, where $\beta \in [0,1]$ is a weighting coefficient measuring the relative importance of cluster-specific historical information versus recent optimization direction.

\stitle{Step 3: Solving optimal $\beta^*$ to obtain final IU Change Approximation $\Delta L'(\theta_t, S_t)$.}
The $\beta^*$ can be solved by minimizing the difference between the current gradient and the approximate gradient.
\begin{equation} \small
J(\beta) = \|\nabla L_t - (\beta \nabla L_{t_k} + (1-\beta)\nabla L_{t-1})\|^2
\end{equation}

Using the gradient descent update rule $\Delta\theta_t = -\eta\, \nabla L_t$, we rewrite in terms of parameter updates:
\begin{align} \small
J(\beta) &= \frac{1}{\eta^2}\left\|\Delta\theta_t - \Bigl(\beta\, \Delta\theta_{t_k} + (1-\beta)\, \Delta\theta_{t-1}\Bigr) \right\|^2.
\end{align}

Since $\|\Delta\theta_{t_k}\|^2 \approx -\eta\, \delta_{t_k}$ and $\cos\phi = \frac{\Delta\theta_{t_k}^{\top}\Delta\theta_{t-1}}{\|\Delta\theta_{t_k}\|\|\Delta\theta_{t-1}\|}$.

Setting $\frac{d\tilde{J}}{d\beta}=0$ yields the optimal coefficient:
\begin{equation} \small
\beta^* = \frac{\delta_{t-1} - 
\sqrt{\delta_{t_k}\delta_{t-1}}\, \cos\phi}{\delta_{t_k} + \delta_{t-1} - 2\sqrt{\delta_{t_k}\delta_{t-1}}\, \cos\phi}.
\end{equation}

The loss change is then approximated as:
\begin{align} \small
\delta_t &= -\eta\Bigl[(\beta^*)^2\, \delta_{t_k} + (1-\beta^*)^2\, \delta_{t-1} + 2\,\beta^*(1-\beta^*)\,\sqrt{\delta_{t_k}\delta_{t-1}}\,\cos\phi\Bigr]. 
\end{align}

\section{Experiments}
\label{sec:expr}

\subsection{Experimental Setup}




\stitle{Data Pool.} 
To simulate realistic and diverse training scenarios, we construct a large-scale and heterogeneous data pool comprising approximately \textbf{600,000 samples}. Our dataset integrates multiple well-established public sources, including  WizardLM (ShareGPT)~\cite{luinstag}, WizardLM (Alpaca)~\cite{luinstag}, UltraChat~\cite{ding2023enhancing}, Standard Alpaca~\cite{taori2023stanford}, unnatural~\cite{honovich2022unnatural}, Alpaca code~\cite{chaudhary2023code}, MATH~\cite{hendrycks2021measuring}, GSM8K~\cite{cobbe2021training}. We closely follow Tulu~\cite{wang2023far} to process these datasets. All methods will select data from this pool for LLMs' instruction tuning.

\stitle{Benchmarks and Metrics.} We comprehensively evaluate our method across four representative tasks that reflect critical capabilities required by modern LLMs. 


\begin{itemize}
\item \texttt{Code Generation.} 
We use the extensively utilized \texttt{HumanEval} benchmark~\cite{chen2021evaluating}, consisting of 164 coding problems, to evaluate the code-writing capabilities of LLMs. Performance is measured via the widely adopted \texttt{pass@10} metric.

\item \texttt{Math Reasoning.} We use \texttt{GSM8k}~\cite{cobbe2021training} to evaluate the mathematical abilities of models, which contains 1319 grade school math test data. We adopt an 8-shot setting and evaluate performance using the \texttt{exact match accuracy} metric.

\item \texttt{Multi-task Knowledge and Reasoning.} We evaluate on \texttt{MMLU}~\cite{hendrycksmeasuring}, which consists of a range of multiple-choice academic questions.
We report \texttt{accuracy} as the metric.

\item \texttt{Cross-lingual Question Answering.}  To assess multilingual understanding, we utilize the  \texttt{TYDIQA}~\cite{clark2020tydi}, featuring questions from 11 diverse languages. We report standard \texttt{F1 scores} for both passage selection and answer span extraction tasks.
\end{itemize}

\stitle{Baselines.} We study several existing state-of-the-art methods as our baselines for data selection.

\etitle {(1) Full Data}: Train the model using the entire data pool.

\etitle {(2) Random Selection}~\cite{xia2024rethinking}: Randomly selects training samples.

\etitle {(3) Instruction-Following Difficulty (IFD)}~\cite{li2024quantity}: 
Selects samples based on a complexity metric measuring instruction-following difficulty.

\etitle {(4) Perplexity (PPL)}~\cite{li2024superfiltering}: 
Prioritizes uncertain samples with high perplexity.

\etitle {(5) K-Center-Greedy (KCG)}~\cite{sener2018active}: 
Maximizes diversity by iteratively choosing the sample farthest from the current selection.

\etitle {(6) SelectIT}~\cite{liu2024selectit}: 
Selects samples via uncertainty-aware self-reflection during instruction tuning.
 
\etitle {(7) Token Length (TL)}~\cite{xia2024rethinking}: 
Selects samples with the longest response lengths.

\etitle {(8) ZIP}~\cite{yin2024entropy}: 
prompting a strong LLM to estimate and select samples based on quality, relevance, and complexity scores.

\begin{figure*}[t!]
\small
    \centering
        \centering
        \captionof{table}{Comparison of Performance across Different Benchmarks for Various Methods.}
        \vspace{-1em}
        \resizebox{\linewidth}{!}{%
            \begin{tabular}{l|*{9}{c}|c}
                \toprule
                \multirow{2}{*}{\textbf{Benchmark}} & \multicolumn{9}{c|}{\textbf{Methods}} & \multirow{2}{*}{$\mathbf{\Delta}$} \\
                \cmidrule(lr){2-10}
                 & \textbf{Full Data} & \textbf{Random} & \textbf{PPL} & \textbf{KCG} & \textbf{TL} & \textbf{IFD} & \textbf{SelectIT} & \textbf{ZIP} & \textbf{\model (Ours)} & \\
                \midrule
                \multicolumn{10}{c}{\textbf{LLaMA3.1-8B}} \\
                \midrule
                MMLU & 65.13 & 64.30 & 63.27 & 61.39 & 64.10 & 64.48 & 64.93  & 63.45 & \textbf{65.40} & \cellcolor{green!50!black!20}+1.10 \\
                TYDIQA & 50.94 & 40.91 & 41.89 & 43.12 & 46.47 & 55.66 & 61.33  & 45.41  & \textbf{63.24} &\cellcolor{green!50!black!20}+22.33 \\
                GSM8K & 56.63 & 54.80 & 56.32 & 51.73 & 54.28 & 43.52 & 54.89 & 57.32  & \textbf{60.88} &\cellcolor{green!50!black!20}+6.08 \\
                HumanEval & 68.52 & 70.24 & 71.44 & 69.80 & 73.99 & 70.40 & 69.33  & 67.68 & \textbf{76.95} &\cellcolor{green!50!black!20}+6.71 \\
                \textbf{Average} & \cellcolor{blue!50!black!10}60.31 & 57.66 & 58.23 & 56.51 & 59.71 & 58.52 & \cellcolor{blue!50!black!20}\underline{\textbf{62.62}}  & 58.47 &\cellcolor{blue!50!black!30}\textbf{66.62} & \cellcolor{green!50!black!20}+8.96 \\
                \midrule
                \multicolumn{10}{c}{\textbf{Mistral-7B}} \\
                \midrule
                MMLU & 61.45 & 61.68 & 62.38 & 61.02 & 61.93 & 61.65 & 64.93  & 61.93 & \textbf{62.10} &\cellcolor{green!50!black!20}+0.42 \\
                TYDIQA & 49.63 & 38.02 & 52.72 & 39.79 & 39.88 & 41.41 & 36.79  & 42.04 & \textbf{67.17} &\cellcolor{green!50!black!20}+29.15 \\
                GSM8K & 40.56 & 33.51 & 22.82 & 33.89 & 37.76 & 31.77 & 35.86 & 41.17  & \textbf{45.26}  &\cellcolor{green!50!black!20}+11.75 \\
                HumanEval & 58.37 & 57.35 & 54.68 & 59.96 & 60.54 & 52.05 & 58.15 & 61.91 & \textbf{59.01} &\cellcolor{green!50!black!20}+1.66 \\
                \textbf{Average} & \cellcolor{blue!50!black!20}\underline{\textbf{52.50}} & 47.64 & 48.15 & 48.67 & 50.03 & 46.72 & 48.93  & \cellcolor{blue!50!black!10}51.76 &\cellcolor{blue!50!black!30}\textbf{58.39}  & \cellcolor{green!50!black!20}+10.75 \\
                \midrule
                \multicolumn{10}{c}{\textbf{Qwen2-7B}} \\
                \midrule
                MMLU & 70.54 & 69.85 & 70.70 & 70.64 & 70.52 & 70.03 & 70.32  & 70.54 & \textbf{70.19} &\cellcolor{green!50!black!20}+0.34 \\
                TYDIQA & 42.94 & 43.43 & 42.63 & 40.92 & 38.91 & 35.00 & 43.80 & 34.51 & \textbf{56.06}  &\cellcolor{green!50!black!20}+12.63 \\
                GSM8K & 73.16 & 73.16 & 79.00 & 76.04 & 78.53 & 74.91 & 74.60  & 75.66 & \textbf{79.83} &\cellcolor{green!50!black!20}+6.67 \\
                HumanEval & 82.56 & 79.51 & 78.44 & 78.81 & 80.79 & 81.94 & 78.14  & 83.91 & \textbf{84.22}  &\cellcolor{green!50!black!20}+4.71 \\
                \textbf{Average} & 
                \cellcolor{blue!50!black!10}67.30 & 66.49 & \cellcolor{blue!50!black!20}\underline{\textbf{67.69}} & 66.60 & 67.19 & 65.47 & 66.72  & 66.16 &\cellcolor{blue!50!black!30}\textbf{72.58} & \cellcolor{green!50!black!20}+6.09 \\
                \bottomrule
            \end{tabular}%
        }
        \label{tab:main_result}
\end{figure*}

\stitle{Implementation Details of LEAD.} We evaluate \model using three foundational models (LLAMA-3.1-8B, Mistral-7B and Qwen2-7B) and utilize Low-Rank Adaption (LoRA)~\cite{hu2022lora} for parameter-efficient fine-tuning. The maximum learning rate is set as $2 \times 10^{-5}$ with a linear decay schedule, and the batch size is 8. We also fix the maximum input sequence length to 3080. Models are trained for 4 epochs on 4 H800 GPUs. For the MAB setting, the number of arms is set to 7. The maximum sampling budget of \model is 15$K$. 

\subsection{Exp-1: Overall Performance}


We first evaluate \model and all baseline methods using the same budget of $15K$ samples, corresponding to $2.5\%$ of the data pool.

Table~\ref{tab:main_result} summarizes the evaluation results across various benchmarks (MMLU, TYDIQA, GSM8K, and HumanEval) and model architectures (LLaMA3.1-8B, Mistral-7B, and Qwen2-7B). Overall, \model consistently outperforms state-of-the-art baselines, demonstrating its effectiveness. Note that $\Delta$ denotes the performance improvement of \model compared to the \texttt{Random} baseline.

\stitle{Consistent Effectiveness of LEAD across LLMs.} \model demonstrates remarkable effectiveness across different model architectures: For LLaMA3.1-8B, it achieves an average score of 66.62, outperforming full dataset training (60.31) by a substantial +6.31 points. Similar gains are seen with Mistral-7B (+10.75) and Qwen2-7B (+6.09). This cross-architecture consistency confirms that LEAD reliably selects high-value samples beneficial for diverse LLMs.

\stitle{2.5\% of Data is All You Need.} 
Remarkably, \model achieves these substantial gains using only 2.5\% of the entire dataset, challenging the conventional assumption that larger datasets inherently produce superior results. Specifically, our method outperforms full dataset training (\texttt{Full Data} baseline) across all model and benchmark settings. For example, on the challenging TYDIQA benchmark, our approach yields remarkable gains of $22.33$, $29.15$, and $12.63$ points of improvement across the three models, respectively, demonstrating that carefully selected instruction samples can lead to more focused and effective learning. 

\stitle{Outperforming State-of-the-art Baselines.} 
\model outperforms all baseline selection methods with consistent effectiveness across models and benchmarks. While some baselines perform well in specific cases (\eg SelectIT on LLaMA3.1-8B and PPL on Qwen2-7B), they fall short in other settings. In contrast, our approach maintains consistent high performance across the board,
Notably, on the HumanEval benchmark for code generation, \model achieves top performance across all models.
\begin{figure*}[t!]
  \centering
  \includegraphics[width=\linewidth]{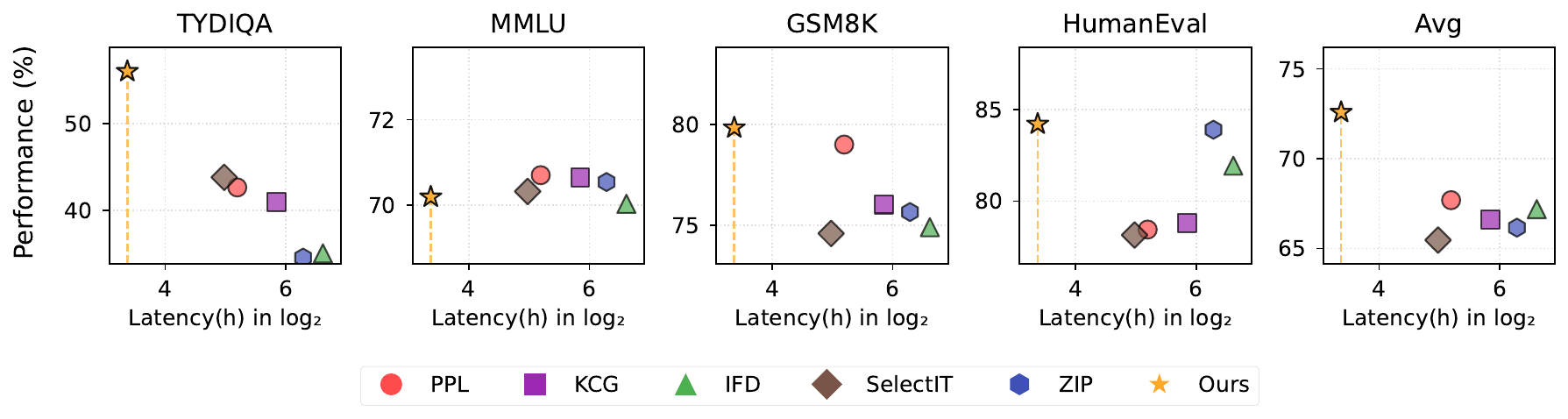}
  \caption{Comparison of Performance($y$-axis) and Latency($x$-axis) across six data selection methods. }
  \label{fig:latency}
\end{figure*}

\subsection{Exp-2: The Efficiency of \model}

We evaluate the efficiency of \model compared to baseline methods (PPL, KCG, IFD, SelectIT, and ZIP) across four benchmarks. Note that we exclude \texttt{Random} and \texttt{TL} from this comparison, as these methods incur minimal computational overhead and were shown to perform significantly worse in \textbf{Exp-1}. We report the overall latency of all methods with \textit{one round of selection iteration} on average.

\stitle{Exp-2.1: Performance \vs Latency.}
We compare performance and inference latency (in $log_2$ scale) across different methods. As shown in Figure~\ref{fig:latency}, \model (marked with a star) consistently achieves the best performance-latency trade-off, occupying the upper-left region of each plot. \model delivers a roughly 5× faster inference time compared to baselines, while maintaining top performance on benchmarks like TYDIQA, GSM8K, and HumanEval. 

\stitle{Exp-2.2: Analysis of Latency Composition.}
Figure~\ref{fig:infer_train_time} compares latency components (inference and training) of different methods. Inference time constitutes the primary computational bottleneck for traditional methods (\eg IFD: 98.0 hours, ZIP: 78.0 hours), due to repeated full-dataset inference at each selection iteration. In contrast, \model requires inference only once (10.3 hours) for initial selection, eliminating subsequent inference overhead via inference-free IDU estimation.

\begin{figure}[t!]
    \includegraphics[width=\columnwidth]{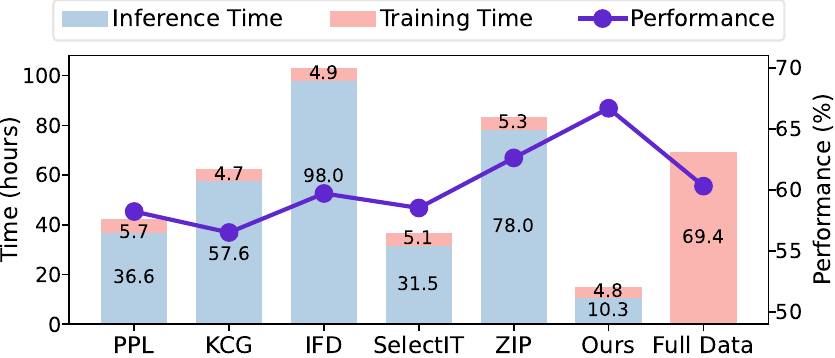}
    \caption{Inference Time (Full Data) and Training Time (Selected Data) per Iteration across Different Methods.}
    \label{fig:infer_train_time}
\end{figure}

\subsection{Exp-3: Static \vs Iterative Data Selection}

These experiments validate the necessity of iterative data selection.


\stitle{Exp-3.1: Dynamics of Sample Utility over Training.} 
We first track the overlap of samples initially identified as valuable (iteration 0) with the top-$k$ samples in later iterations (1, 4, 7, and 10). As illustrated in Figure~\ref{fig:heatmap}, the coverage rate for $k$=15,000 increases initially (from 0.77 to 0.98 at iteration 4), but significantly declines (to 0.67) in later iterations. This clearly demonstrates the dynamic nature of sample utility, emphasizing the importance of continuously adapting data selection to the evolving state of the model.

\begin{figure}[t!]
    \centering
        \centering        \includegraphics[width=0.65\columnwidth]{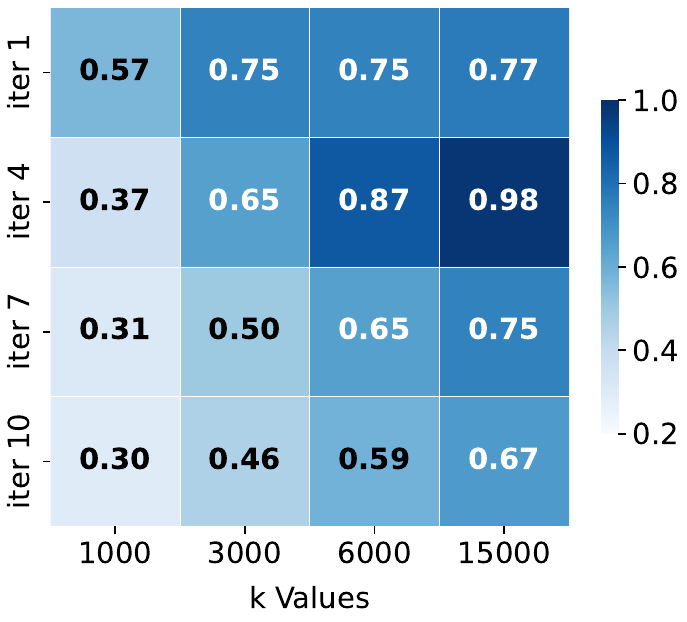}
        \vspace{-1em}
        \caption{Coverage of Top-$k$ Samples between Iter. $t$ and 0.}
        \label{fig:heatmap}
\end{figure}

\stitle{Exp-3.2: Performance of Static and Iterative Selection.}
We further compare the performance between one-round (static) and iterative selection strategies (Table~\ref{tab:iu_idu_comparison}).
Iterative LEAD (IU) consistently surpasses One-round LEAD (IU), achieving an average improvement of 1.17 points (64.33 \vs 63.16). This performance gap confirms that iterative data selection is essential, as the utility of training samples dynamically changes throughout model training.

\begin{table}[t!]
    \centering
    \renewcommand{\arraystretch}{1}
    \setlength{\tabcolsep}{7pt}
    \caption{Comparison between IU and IDU. LEAD (IDU) refers to our method using IDU as the utility function for calculating sample utility. One-round and Iterative LEAD (IU) denote non-iterative and iterative variants of the IU approach.}
    \resizebox{\linewidth}{!}{
    \newcommand{\deltavalue}[1]{\begin{tabular}{c}#1\\\cellcolor{red!15}{\large$\downarrow$#1}\end{tabular}}
    \begin{tabular}{l|cccc|c}
        \toprule
        \multirow{2}{*}{\textbf{Method}} & \multicolumn{4}{c|}{\textbf{Benchmarks}} & \multirow{2}{*}{\textbf{Average}} \\
        \cmidrule(lr){2-5}
        & \textbf{MMLU} & \textbf{TYDIQA} & \textbf{GSM8K} & \textbf{HumanEval} & \\
        \midrule
        \textbf{LEAD (IDU)} & \textbf{65.40} & \textbf{63.24} & \textbf{60.88} & \textbf{76.95} & \textbf{66.62} \\
        \midrule
        One-round LEAD (IU) & 
            \begin{tabular}{c}63.92\\\cellcolor{red!15}{\large$\downarrow$1.48}\end{tabular} & 
            \begin{tabular}{c}59.13\\\cellcolor{red!15}{\large$\downarrow$4.11}\end{tabular} & 
            \begin{tabular}{c}57.47\\\cellcolor{red!15} {\large$\downarrow$3.41}\end{tabular} & 
            \begin{tabular}{c}72.13\\\cellcolor{red!15}{\large$\downarrow$4.82}\end{tabular} & 
            \begin{tabular}{c}63.16\\\cellcolor{red!15}{\large$\downarrow$3.46}\end{tabular} \\
        \midrule
        Iterative LEAD (IU) & 
            \begin{tabular}{c}64.72\\\cellcolor{red!15}{\large$\downarrow$0.68}\end{tabular} & 
            \begin{tabular}{c}60.15\\\cellcolor{red!15}{\large$\downarrow$3.09}\end{tabular} & 
            \begin{tabular}{c}57.99\\\cellcolor{red!15}{\large$\downarrow$2.89}\end{tabular} & 
            \begin{tabular}{c}74.46\\\cellcolor{red!15}{\large$\downarrow$2.49}\end{tabular} & 
            \begin{tabular}{c}64.33\\\cellcolor{red!15}{\large$\downarrow$2.29}\end{tabular} \\
        \bottomrule
    \end{tabular}
    }
    \label{tab:iu_idu_comparison}
    \vspace{-1em}
\end{table}




\begin{table}[t!]
    \centering
    \large 
    \caption{Ablation Study of Different Modules (LLaMA3.1-8B)}
    \label{tab:ablation_study}
    \resizebox{\linewidth}{!}{
    \begin{tabular}{c|ccc|cccc|c}
        \toprule
        \multirow{2}{*}{\textbf{Models}} & \multicolumn{3}{c|}{\textbf{Module}} & \multicolumn{4}{c|}{\textbf{Benchmarks}} & \multirow{2}{*}{\textbf{Average}} \\
        \cmidrule(lr){2-4} \cmidrule(lr){5-8}
        & \textbf{MAB} & \textbf{TC} & \textbf{IDU} & \textbf{MMLU} & \textbf{TYDIQA} & \textbf{GSM8K} & \textbf{HumanEval} & \\
        \midrule
        \multirow{6}{*}{\model} 
        & \checkmark & & & 64.83 & 59.84 & 54.81 & 72.13 & 62.90 \\
        & & \checkmark & & 62.71 & 61.31 & 51.48 & 74.25 & 62.44 \\
        & & & \checkmark & 64.13 & 61.47 & 57.92 & 74.93 & 64.61 \\
        & \checkmark & \checkmark & & 65.1 & 55.88 & 57.99 & 74.41 & 63.35  \\
        & \checkmark & & \checkmark & 64.7 & 66.46 & 55.95 & 74.46 & 65.39 \\
        & & \checkmark & \checkmark & 65.3 & 64.29 & 56.40 & 73.38 & 64.84 \\
        \midrule
        \textbf{\model (Ours)} & \checkmark & \checkmark & \checkmark & \textbf{65.40} & \textbf{63.24} & \textbf{60.88} & \textbf{76.95} & \textbf{66.62} \\
        \bottomrule
    \end{tabular}
    }
\end{table}

\begin{table}[t!]
    \centering
    \caption{Ablation Study of LEAD Framework}
    \label{tab:replace_strategy}
    \renewcommand{\arraystretch}{1}
    \setlength{\tabcolsep}{2.5pt} 
    \large
    \resizebox{\linewidth}{!}{
    \begin{tabular}{c|c|cccc|c}
        \toprule
        \multirow{2}{*}{\textbf{Method}} & \multirow{2}{*}{\textbf{Replace Strategy}} & \multicolumn{4}{c|}{\textbf{Benchmarks}} & \multirow{2}{*}{\textbf{Average}}  \\
        \cmidrule(lr){3-6}
        & & \textbf{MMLU} & \textbf{TYDIQA} & \textbf{GSM8K} & \textbf{HumanEval} & \\
        \midrule
        \multirow{2}{*}{\parbox{1.5cm}{\centering Reward Function}}
        & IFD-MAB & 65.29 & 65.31 &  51.02 & 72.13 & 63.44  \\
        & PPL-MAB & 65.52 & 67.17 & 51.71 & 72.11 & 64.13  \\
        \midrule
        \multirow{4}{*}{\centering IDU} 
        & Random & 65.10 & 55.88 & 57.99 & 74.41 & 63.35  \\
        & PPL & 64.13 & 49.40 & 52.53 & 68.17 & 59.59 \\
        & IU & 64.72 & 60.15 & 57.99  & 74.46 & 63.56  \\
        & IFD & 64.92 & 54.98 & 51.86 & 70.71 & 60.62 \\
        \midrule
        \multirow{3}{*}{\centering MAB} & Random & 64.17 & 61.46 & 55.95 & 74.00 & 63.90  \\
        & Easy2Hard & 64.73 & 60.32 & 58.98 & 71.81 & 63.96  \\
        & Hard2Easy & 64.29 & 61.96 & 56.65 & 74.54 & 64.36 \\
        \midrule
        \cellcolor{green!50!black!20}\textbf{\centering Ours} & \cellcolor{green!50!black!20}\textbf{-} & \cellcolor{green!50!black!20}\textbf{65.40} & \cellcolor{green!50!black!20}\textbf{63.24} & \cellcolor{green!50!black!20}\textbf{60.88} & \cellcolor{green!50!black!20}\textbf{76.95} & \cellcolor{green!50!black!20}\textbf{66.62}  \\
        \bottomrule
    \end{tabular}
    }
\end{table}


\subsection{Exp-4: Ablation Study of LEAD}

\stitle{Exp-4.1: Ablation Study on LEAD Components.}
To validate the effectiveness of our proposed framework, we conduct an ablation study on the LLaMA3.1-8B model by systematically removing individual modules of our \model framework. 
As shown in Table~\ref{tab:ablation_study}, removing any module leads to a performance drop: average metric decreases by 1.78 (MAB), 1.23 (TC), and 3.27 (IDU). The IDU module has the most pronounced impact, particularly on TYDIQA (-7.36), underscoring its role in identifying informative samples. Removing the TC module also degrades performance across all benchmarks, confirming the value of semantic clustering. The removal of the MAB module significantly affects performance on the challenging GSM8K (-4.48), demonstrating its role in balancing exploration and exploitation. Overall, the ablation study highlights the critical contribution of each component within the \model framework.


\stitle{Exp-4.2: The Effectiveness of IDU Utility.} To demonstrate the effectiveness of our proposed Instance-Level Dynamic Uncertainty (IDU) mechanism, we conducted comprehensive experiments examining its performance from two perspectives.

First, to verify that IDU effectively smooths the instability issues during iterative selection, we compared LEAD (IDU) against LEAD (IU) on LLaMA3.1-8B. As shown in Table~\ref{tab:iu_idu_comparison}, LEAD (IDU) consistently outperforms iterative LEAD (IU) across all benchmarks with a substantial average improvement of 3.06\% (66.62 \vs 63.56). This confirms that IDU's design—combining current loss signals and historical exponential smoothing—effectively addresses the loss instability challenge inherent in conventional utility functions.

Second, to validate IDU's superiority as a utility function, we compared it against alternative utility metrics while keeping other LEAD components intact. The results in Table~\ref{tab:replace_strategy} show that replacing IDU with conventional metrics like PPL leads to dramatic performance degradation (from 66.62 to 59.59), with particularly severe reductions on TYDIQA (-13.84\%). Even when compared to the more advanced IFD metric, IDU maintains a substantial advantage (66.62 vs. 60.62). This consistent performance advantage across diverse benchmarks highlights IDU's robustness as a selection criterion that can reliably identify valuable training samples across various domains and task structures in the iterative selection process.

\stitle{Exp-4.3: The Effectiveness of MAB Module.} To assess the MAB module's contribution, we compare it against three baselines: (1) Random-\model: random selection of difficulty-aware clusters per iteration; (2) Easy2Hard-\model: iterative training from easy to hard clusters based on difficulty scores; and (3) Hard2Easy-\model: iterative training from hard to easy. For a fair comparison, all modules except the training strategy remained consistent with the \model. 

As shown in Table~\ref{tab:replace_strategy}, our MAB training schedule significantly outperforms the other three strategies, confirming its effectiveness in dynamically balancing exploration and exploitation. By adaptively selecting difficulty-aware clusters, MAB enhances both overall performance and generalizability.

In contrast, Easy2Hard-\model yields the low score (63.96), highlighting the limitations of traditional curriculum learning in instruction tuning, as a fixed progression from easy to hard can hinder learning dynamics and lead to premature convergence. Hard2Easy-\model performs slightly better (64.36), yet still underperforms compared to MAB, indicating that prioritizing difficult clusters alone does not guarantee optimal results.

\stitle{Exp-4.4: The Effectiveness of Reward Function.} 
We assess the effectiveness of our proposed IDU-based reward by comparing it with two widely-used reward metrics: Instruction-Following Difficulty (IFD)~\cite{li2024quantity} and Perplexity (PPL)~\cite{li2024superfiltering}.

As shown in Table~\ref{tab:replace_strategy}, our IDU-based reward consistently achieves the best overall performance (average 66.62), surpassing IFD (63.44) and PPL (64.13). This demonstrates that directly measuring the reduction in instance-level dynamic uncertainty provides more effective guidance for cluster selection than traditional metrics.

\begin{figure}[t!]
    \centering
    \includegraphics[width=0.68\columnwidth]{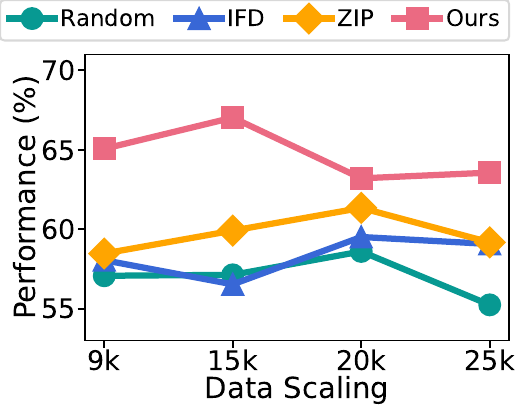}
        \vspace{-1em}
        \caption{Avg Performance by Varying Data Scaling.}
        \label{fig:avg_performance_data_scale}
\end{figure}

\begin{figure*}[t!]
    \centering
    \includegraphics[width=\linewidth]{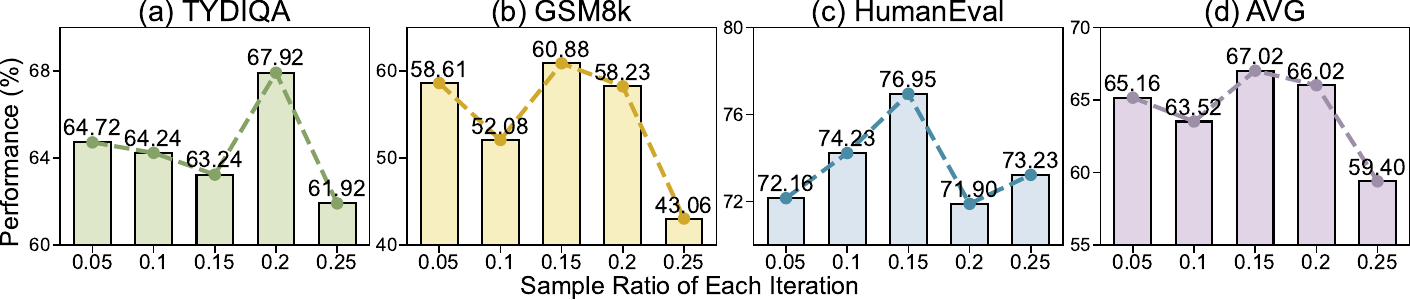}
        \vspace{-2em}
        \caption{Performance on Various Sample Ratios of Each Iteration (LLaMA3.1-8B).}
        \label{fig:sampleratio}
\end{figure*}

\begin{figure}[t!]
    \centering
    \includegraphics[width=\linewidth]{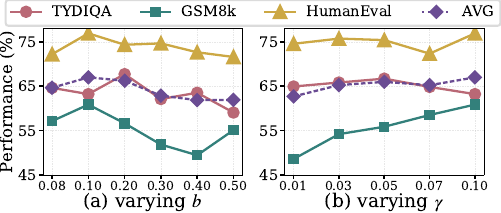}
        \vspace{-2.em}
        \caption{Parameter Sensitivity Analysis}
        \label{fig:Parameter}
\end{figure}

\subsection{Exp-5: Evaluation of Optimal Data Scaling} 
To examine the impact of data selection strategies on data scaling effectiveness, we conduct experiments using subsets with varying budgets. As illustrated in Figure~\ref{fig:avg_performance_data_scale}, \model consistently presents higher average performance than alternative selection methods across all data quantities, achieving peak performance with only 15K samples.
Notably, we observe a non-linear performance curve: gains taper and eventually decline beyond a certain data threshold,
which reveals a crucial insight: 
``alignment-suitable data'' is inherently limited. This finding challenges the conventional wisdom that more data automatically yields better results, underscoring the critical importance of strategic data selection over mere quantity.


\subsection{Exp-6: Parameter Sensitivity Analysis}
In this experiment, we conduct parameter sensitivity analysis to reveal how hyperparameters affect \model's performance across different tasks, providing insights into optimizing the framework.

\stitle{Effect of Sampling Threshold $\alpha$ of \model.} As shown in Figure~\ref{fig:sampleratio}, 
performance peaks when $\alpha$ is between 0.15 and 0.20, reaching a balance between iteration quantity and quality. Higher $\alpha$ values yield more samples per round but fewer iterations, limiting adaptability. Lower values allow more iterations but provide weaker signals. 

\stitle{Effect of Smoothing Coefficient $b$ of IDU.} 
Figure~\ref{fig:Parameter}(a) shows optimal performance at $b$=0.1, achieving a favorable trade-off between historical and current utility signals. This sweet spot effectively leverages historical information to stabilize selection while remaining responsive to recent model changes. Lower values ($b$<0.1) overemphasize current utility fluctuations, increasing susceptibility to noise.
while higher ones ($b$>0.2) overweight historical information, reducing responsiveness. 


\stitle{Effect of Exploration Rate $\gamma$ of MAB.} 
As shown in Figure~\ref{fig:Parameter}(b), the exploration-exploitation tradeoff in our MAB algorithm shows optimal performance at moderate exploration rates ($\gamma$=0.05-0.07). Minimal exploration ($\gamma$=0.01) limits discovery of new clusters, whereas excessive exploration ($\gamma$=0.12) hinders focus on promising clusters.

\section{Related Work}

\stitle{Data Selection for Instruction Tuning.} Previous works on data selection~\cite{xia2024less, zhou2023lima, hanmo2024effective,sigmod_tutorial} can be broadly categorized into two key approaches: model-agnostic methods and model-aware methods. 

Model-agnostic methods operate independently of the target model, including rule-based approaches~\cite{soldaini2024dolma,zhuo2024astraios, kopf2023openassistant, muennighoff2023octopack, cao2023instruction, luinstag,hod} that are computationally efficient but lack semantic understanding. Advanced model-based methods~\cite{lian2023slimorca, chen2023alpagasus, chenalpagasus}  like GPT-4~\cite{achiam2023gpt}  that provide nuanced assessment at high computational cost, and proxy model-based methods~\cite{li2024superfiltering, yang2024smalltolarge} that balance efficiency and quality. However, these methods cannot adapt to the specific learning characteristics of the target model. Model-aware methods~\cite{cao2023instruction, zhang2024harnessing, chai2023goodcore,chai2022selective,liu2022feature,visclean_icde,visclean_vldb} address this limitation by customizing selection based on the model's learning dynamics, though they introduce higher computational costs through required model inference or fine-tuning. In contrast, LEAD proposes a two-stage adaptive approach that efficiently combines model-aware adaptiveness with zero computational overhead, effectively addressing the challenge of balancing effectiveness and efficiency in instruction tuning data selection.

\stitle{Sample Utility Scores.} Sample utility scoring plays a critical role in data selection, employing various predefined metrics~\cite{wang2024fast, chai2023goodcore, ratner2017snorkel}. Perplexity-based metrics~\cite{marion2023less, li2024superfiltering}  favor simpler patterns, while diversity-aware selection~\cite{wu2023self, yu2024diversify} ensures broad coverage but depends heavily on pre-trained embedding quality. Quality-based metrics incorporating influence scoring~\cite{xia2024less, ghorbani2019data, kwondatainf, choe2024your} and external model~\cite{li2024one} evaluation are theoretically sound but require expensive gradient computations. Complexity-based selection~\cite{li2024quantity, liu2024what} risks including noisy samples that hinder convergence, while uncertainty-driven metrics~\cite{han2025automatic, liu2024selectit} suffer from instability due to loss landscape irregularities. A common limitation across these approaches is their significant computational overhead. Although recent efforts have improved data efficiency in utility estimation, they still incur additional costs. We propose IDU, a novel utility function achieving zero-cost estimation while maintaining selection effectiveness.

\section{Conclusion}


In this paper, we proposed \model, an efficient iterative data selection framework for instruction tuning of LLMs. \model introduces Instance-Level Dynamic Uncertainty utility function, enabling accurate utility estimation without extra inference. In addition, we developed a coarse-to-fine selection approach guided by a multi-armed bandit mechanism. Experiments show \model achieves 6.1\%-10.8\% performance improvement using only 2.5\% training data and reduces training costs by 5-10×.

\clearpage

\bibliographystyle{ACM-Reference-Format}
\bibliography{sample-base}

\end{document}